\newcommand{\pr}{{\rm {Pr}}}
\newcommand{\e}{{\rm e}}
\def\@footnotetext{\insert\footins\bgroup\@makeother\#\do@footnotetext}
\newcommand{\ttvar}{\begingroup\@makeother\#\@ttvar}
\newcommand\myeqa{\mathrel{\stackrel{\makebox[0pt]{\mbox{\normalfont\tiny $(a)$}}}{=}}}
\newcommand\myeqb{\mathrel{\stackrel{\makebox[0pt]{\mbox{\normalfont\tiny $(b)$}}}{=}}}
\newcommand\myleqa{\mathrel{\stackrel{\makebox[0pt]{\mbox{\normalfont\tiny $(a)$}}}{\leq}}}
\newcommand\myleqb{\mathrel{\stackrel{\makebox[0pt]{\mbox{\normalfont\tiny $(b)$}}}{\leq}}}
\newcommand\myleqc{\mathrel{\stackrel{\makebox[0pt]{\mbox{\normalfont\tiny $(c)$}}}{\leq}}}
\newcommand\mygeqa{\mathrel{\stackrel{\makebox[0pt]{\mbox{\normalfont\tiny $(a)$}}}{\geq}}}
\newcommand\mygeqb{\mathrel{\stackrel{\makebox[0pt]{\mbox{\normalfont\tiny $(b)$}}}{\geq}}}
\newcommand\myleqqb{\mathrel{\stackrel{\makebox[0pt]{\mbox{\normalfont\tiny $(b)$}}}{<}}}
\newcommand\bigexists{%
	\mathop{\lower0.75ex\hbox{\ensuremath{%
				\mathlarger{\mathlarger{\exists}}}}}%
	\limits}
\newtheorem{theorem}{Theorem}
\newtheorem{cor}{Corollary}
\newtheorem{por}{Proposition}
\begin{document}

\title{On Supervised Classification of Feature Vectors with Independent and Non-Identically Distributed Elements}
\author{Farzad Shahrivari and Nikola Zlatanov
\thanks{Farzad Shahrivari and Nikola Zlatanov are  with the Department of Electrical and Computer Systems Engineering, Monash University, Melbourne, VIC $3800$, Australia.  (email: farzad.shahrivari@monash.edu, nikola.zlatanov@monash.edu).}
\vspace{-5mm}
}

\maketitle
\begin{abstract}
In this paper, we investigate the problem of classifying  feature vectors with mutually independent but non-identically distributed elements. First, we show the importance of this problem. Next,  we propose a classifier and   derive an analytical upper bound on its  error probability. We show that the error probability goes to zero as  the length of the feature vectors  grows, even when there is only one training feature vector per label available. Thereby, we show that for this important problem at least one  asymptotically optimal classifier exists. Finally, we provide numerical examples where we show that the performance of the proposed classifier outperforms conventional classification algorithms when the number of training data is small and the length of the feature vectors is sufficiently high.  
\end{abstract}

\begin{IEEEkeywords}
Supervised classification, independent and non-identically distributed features, analytical error probability.
\end{IEEEkeywords}

\IEEEpeerreviewmaketitle
\section{Introduction}

\subsection{Background}

\IEEEPARstart{S}upervised classification is a machine learning technique that maps an input feature vector  to an output label based on  a set of correctly labeled 
training data. 
There is no single learning algorithm that works best on all supervised learning problems,  as shown by the no free lunch theorem in \cite{10.5555/2621980}. As a result, there are many algorithms proposed in the literature whose performance depends on the underlying problem and the amount of  training data available. The most widely used algorithms in the literature are decision trees \cite{Quinlan1983,reason:BreFriOlsSto84a}, Support Vector Machines (SVM) \cite{10.1145/130385.130401,10.1023/A:1022627411411}, Rule-Based Systems \cite{inbook}, naive Bayes classifiers \cite{10.5555/1867135.1867170}, k-nearest neighbors (KNN) \cite{10.5555/273530.273534}, logistic regressions, and neural networks \cite{10.5555/525960,hastie2009elements}.

\subsection{Motivation}

In the following, we discuss the motivation for this work.

\subsubsection{Lack of Tight Upper Bounds on the Performance of  Classifiers} 

In general, there are no tight upper bounds on the performance of the classifiers used in practice.  Many of the previous works only provide experimental performance results. However, this approach has  drawbacks. For example,   one has to rely on the  trail-and-error approach  in order to develop a good classifier for a given problem. Next, the algorithms whose performance has been verified only experimentally may work for a given problem, but may fail to work  when applied to a similar problem. Finally, experimental results do not provide intuition into the underlying problem, whereas  the analytical results   provide the   understanding of the underlying problem and the corresponding solutions.

Motivated by this, in the paper, we aim to investigating classifiers with analytical upper bounds on their performance.

\subsubsection{Independent and Non-Identically Distributed Features} 

In general, we can categorize the statistical properties of the  feature vectors, which are the input to the classifier, into three types. To this end, let $Y^n(X)=\big[Y_1(X),Y_2(X),\ldots,Y_n(X)\big]$ denote the input feature vector to the supervised classifier, where $n$ is the length of the feature vector and  $X$ is the label to which the feature vector $Y^n(X)$ belongs. Then, we can distinguish the following three types of feature vectors depending on the statistics of the elements in the feature vector  $Y^n(X)$. 

The first type of feature vectors is   when the elements of $Y^n(X)$ are independent and identically distributed (i.i.d). This is the simplest features model, but also   the least applicable in practice. This model is identical to hypothesis testing, which has been well investigated in the literature \cite{476321,32134,8052528}. As a result,  tight upper bounds on the  performance of supervised learning algorithms for this type of feature vectors are available in the hypothesis testing literature.  For instance, the authors in \cite{476321} showed that the posterior entropy and the maximum a posterior error probability decay to zero with the length of the feature vector at the identical exponential rate, where the maximum achievable exponent is the minimum Chernoff information. In \cite{32134}, the authors determine the requirements for the length of the vector $Y^n(X)$ and the number of labels $m$ in order to achieve vanishing exponential error probability in testing $m$ hypothesis that minimizes the rejection zone. In \cite{8052528}, the authors provide an upper bound and a lower-bound on the error probability of Bayesian $m$-ary hypothesis testing in terms of conditional entropy.

The second type of  feature vectors is when the elements of  $Y^n(X)$ are mutually dependent and non-identically distributed (d.non-i.d.). This type of features model is the most  general   model and the most applicable   in practice. However,  it is also the most difficult to tackle analytically. As a result,   supervised learning algorithms proposed for this features model    lack   analytical tight upper bounds on their performance \cite{10.5555/2540128.2540377,Wang2015CoupledAS,10.1145/2063576.2063715,Liu2015ACK,inproceedings2,6889773, 10.1093/comjnl/bxt084,CAO2015167,Cao2011CombinedMD,10.5555/1296231}.  This is  because there aren't any frameworks that produce closed-form results when deriving statistics of vectors with d.non-i.d. elements  when the underlying distributions are unknown.  Then how can we investigate analytically classifiers  for the practical scenarios  when the feature vectors have d.non-i.d. elements? A possible approach leads us to the third type of feature vectors, explained in the following.

The third  type of feature vectors is when the   elements of $Y^n(X)$ are mutually independent but non-identically distributed (i.non-i.d.). This features model is much simpler than the d.non-i.d. features model  and,   more importantly, it is analytically tractable, as we show in this paper. Furthermore, this features model is applicable in practice. Specifically, there exists a class  of algorithms, known as Independent Component Analysis (ICA),  that transform  vectors with d.non-i.d. elements into   vectors with i.non-i.d. elements with a zero or a negligible loss of information   \cite{10.1109/18.556108,article1111,10.1016/S0893-6080(00)00026-5,10.1109/TIT.2011.2145090,Nguyen_2011,7362043}.   The origins of ICA can be traced back to   Barlow \cite{10.1162/neco.1989.1.3.295}, who
argued that a good representation of binary data   can be achieved by an invertible transformation  that  transform  vectors with d.non-i.d. elements into   vectors with i.non-i.d. elements. Finding such a transformation with no prior information about the distribution of the data has been considered an open problem until recently \cite{7362043}. Specifically,  the authors in\cite{7362043}   show that this hard problem can be accurately solved with a branch and bound search tree algorithm, or tightly approximated with a series of linear problems. Thereby, the authors in \cite{7362043}   provide  the first efficient set of solutions to Barlow's problem. So far, the complexity of the fastest such  algorithm is $\mathcal{O}\big(n\times2^n\big)$ \cite{7362043}. Nevertheless, since there exist such invertible transformations (i.e., no loss of information) which can transform   vectors with d.non-i.d. elements into   vectors with i.non-i.d. elements,  we can tackle the  features model comprised of d.non-i.d. elements  by  first  transforming it (without loss of information) into the  features model comprised of i.non-i.d. elements and then tackling the  i.non-i.d. features model.

Motivated by   this, in this paper, we investigate supervised classification of feature vectors with i.non-i.d. elements.

\subsubsection{Small Training Set} 
The main factor that impacts the accuracy of   supervised classification is   the amount of  training data. In fact, most supervised algorithms are able to learn only if there is a very large set of training data available \cite{10.1007/978-3-642-59051-1_13}.
The main reason for this is the 
 curse of dimensionality \cite{thecurse,10.5555/954544}, which states that ``the higher the dimensionality of the feature vectors, the more training data is needed for the supervised classifier''\cite{10.1109/34.824819}. For example,  supervised classification methods such as random forest \cite{article13,YE2013769} and KNN\cite{6065061} suffer from the curse of dimensionality. However, having large training data sets is not always possible in practice. As a result,    designing a  supervised classification algorithm that exhibits good performance even when the training data set is extremely small is   important. 
 
 Motivated by this, in this paper, we investigate supervised classifiers for the case  when $t$ training feature vectors per label are available, where $t=1,2,...$

\subsection{Contributions}
In this paper, we propose an algorithm for  supervised classification of feature vectors with i.non-i.d. elements when the number of training feature vectors per label is $t$, where $t=1,2,...$  Next, we derive an upper bound on the error probability of the proposed classifier for uniformly distributed labels and prove that the error probability exponentially decays to zero when the length of the feature vector, $n$, grows, even when only one training vector per label is available, i.e., when $t=1$. Hence, the proposed classification algorithm provides an asymptotically optimal performance even   when the number of training vectors per label is extremely small.  We compare the performance of the proposed classifier with   the naive Bayes classifier 
 and to the KNN algorithm. Our numerical results show that the proposed classifier significantly outperforms the naive Bayes classifier and the KNN algorithm  when the number of training feature vectors per label is small and the length of the feature vectors $n$ is sufficiently high. 

The proposed algorithm is a form of the nearest neighbour classification algorithm, where the nearest neighbour is searched in the domain of empirical distributions. As a result, we refer to the algorithm as   \textit{the nearest   empirical distribution}. The nearest  empirical distribution algorithm is not new  and, to the best of our knowledge, it was first proposed in \cite{matusita1967} for the   case when the elements of $Y^n(X)$ are i.i.d., i.e., for the equivalent problem of hypothesis testing. However, in this paper, we propose the nearest  empirical distribution algorithm for the    case when the elements of $Y^n(X)$ are i.non-i.d., which is much more complex than the problem of hypothesis testing where the elements of $Y^n(X)$ are i.i.d.

 
 The main contributions of this paper are as follows:

\begin{itemize}
 
 \item We show that the   problem  of classifying feature vectors with i.non-i.d. elements is an important problem in machine learning, both from a theoretical and from  a practical point of view. Surprisingly, although important, this problem has not be tackled in the literature yet. Therefore, we are  the first to introduce and tackle this important problem.

\item For this  important problem  of classifying feature vectors with i.non-i.d. elements, we propose a classifier and derive an  analytical upper bound on its error probability. Thereby, we show that this problem can be tackled analytically.

\item Next, we show that that the proposed classifier is asymptotically optimal since its  error probability goes to zero as the length of the feature vector grows. As a result, we show that for this problem at least one asymptotically optimal classifier   exists. Intuitively, there must exist  other classifiers that would exhibit  better non-asymptomatic performance than the proposed classifier. Due to the importance of the proposed problem,  other researchers might investigate these classifiers and this paper aims to serve as the motivation for this future research.

\item   To the best of our knowledge, this is the first   classifier that exhibits an error probability that goes to zero as the length of the feature vector  grows to infinity even if there is  one training feature vector per label available.   Previous classifiers were designed to exhibit optimal performance when the number of training feature vectors per label grows to infinity, and usually these classifiers encounter  problems when the number of features per label goes to infinity due to the curse of dimensionality. However, here, we show that having more independent features is not a curse and in fact is a blessing. This is because   independent features    provide   independent descriptions of the label. As a result, intuitively, the more independent features/descriptions a label can have, the more accurate the classification can be.

\end{itemize}

The remainder of this paper is structured as follows. In Sec. \ref{sec2}, we formulate the considered classification problem. In Sec. \ref{sec3}, we provide our classifier and derive an upper bound on its error probability. In Sec. \ref{sec4}, we provide numerical examples of the performance on the proposed classifier. Finally, Sec. \ref{sec5} concludes the paper.

\section{Problem Formulation}\label{sec2}

The classification learning model is comprised of a label\footnote{In this paper, we adopt the information-theoretic style of notations and thereby random variables are denoted by capital letters and their realizations are denoted with small letters.}
 $X$, a feature vector $Y^n(X)=\big[Y_1(X),Y_2(X),$ $\ldots,Y_n(X)\big]$ of length $n$ mapped to the label $X$, and a learned\textbackslash detected label $\hat{X}$, as shown in Fig.~\ref{fig:1.1}. The feature vector $Y^n(X)$ is the input to the classification learning algorithm whose aim is to detect the label $X$ from the observed feature vector $Y^n(X)$. The performance of the classification learning algorithm is measured by the error probability $\mathbb{P}_{\e}=\pr\big\{X\neq\hat{X}\big\}$.
 
\begin{figure} 
	\centering			\includegraphics[width=0.5\linewidth]{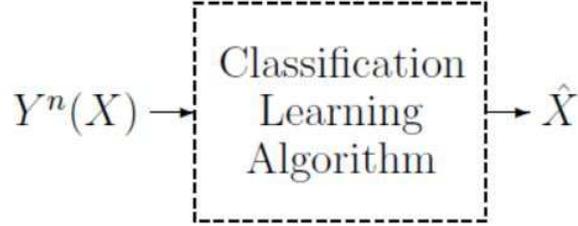}
	\caption{A typical structural modelling of the classification learning problem} \label{fig:1.1}
\end{figure}

We adopt the modelling in \cite{10.1023/A:1013912006537,Deisenroth2020,Lebanon02cranking:combining} and represent the dependency between the label $X$ and the feature vector $Y^n(X)$ via a  joint probability distribution $p_{X,Y^n}(x,y^n)$. Now, in order to have a better understanding of the classification learning problem, we   include the  joint probability distribution $p_{X,Y^n}(x,y^n)$ into the model in Fig.~\ref{fig:1.1}. To this end, 
since $p_{X,Y^n}(x,y^n)=p_{Y^n|X}(y^n|x) p_X(x)$ holds, instead of including $p_{X,Y^n}(x,y^n)$ into Fig.~\ref{fig:1.1}, we can include the    conditional probability distribution $p_{Y^n|X}(y^n|x)$ and the probability distribution $p_X(x)$ into the model in Fig.~\ref{fig:1.1}, and thereby obtain the model in Fig.~\ref{fig:1.2}. 
\begin{figure} 
	\centering			\includegraphics[width=1\linewidth]{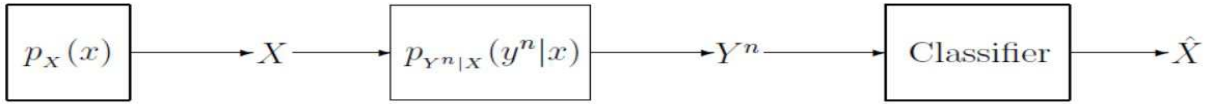}
	\caption{An alternative modeling of the classification learning problem.} \label{fig:1.2}
\end{figure}

Now, the classification learning model in Fig.~\ref{fig:1.2} is a system\footnote{Note that the system model in Fig.~\ref{fig:1.2} can be seen equivalently as a communication system comprised of a source $X$, a channel with input $X$ and output $Y^n$, and a decoder (i.e., detector) that aims to detect $X$ from $Y^n$. The notation used in this paper, letter $X$ for labels and letter $Y$ for features, is based on the notation used in information theory for modelling communication systems.} comprised of a label generating source $X$ according to the distribution $p_{_X}(x)$, a feature vector generator modelled by the conditional probability distribution $p_{_{Y^n|X}}(y^n|x)$, a feature vector $Y^n$, a classifier that aims to detect $X$ from the observed feature vector $Y^n$, and the detected label $\hat{X}$.

 In the classification model shown in Fig.~\ref{fig:1.2}, we assume that the label $X$ can take values from the set $\mathcal{X}$, according to $p_{_X}(x)=\dfrac{1}{|\mathcal{X}|}$, where $|\cdot|$ denotes the cardinality of a set. Next, we assume that the $i$-th element of the feature vector $Y^n$, $Y_i$, for $i=1,2,\dots,n$, takes values from the set $\mathcal{Y}=\big\{y_1,y_2,\ldots,y_{|\mathcal{Y}|}\big\}$, according to the conditional probability distribution $p_{_{Y_i|X}}(y_i|x)$. Moreover, we assume that the elements of the feature vector $Y^n$ are i.non-i.d. As a result, the feature vector $Y^n$ takes values from the set $\mathcal{Y}^n$ according to the conditional probability distribution $p_{_{Y^n|X}}(y^n|x)$ given by
\begin{align}\label{eq_e1}
	p_{_{Y^n|X}}(y^n|x)&=p_{_{Y_1,Y_2,\ldots,Y_n|X}}(y_1,y_2,\ldots,y_n|x)\nonumber\\&\myeqa\prod_{i=1}^np_{_{Y_i|X}}(y_i|x)\nonumber\\
	&\myeqb\prod_{i=1}^np_{_{i}}(y_i|x),
\end{align}
where $(a)$ comes from the fact that elements in the feature vector $Y^n$ are mutually independent and $(b)$ is for the sake of notational simplicity, where $p_{_i}$ is used instead of $p_{_{Y_i|X}}$. As a result of \eqref{eq_e1}, the considered classification model in Fig.~\ref{fig:1.2}  can be represented equivalently as in Fig.~\ref{fig:1.3}.

\begin{figure} 
	\centering			\includegraphics[width=1\linewidth]{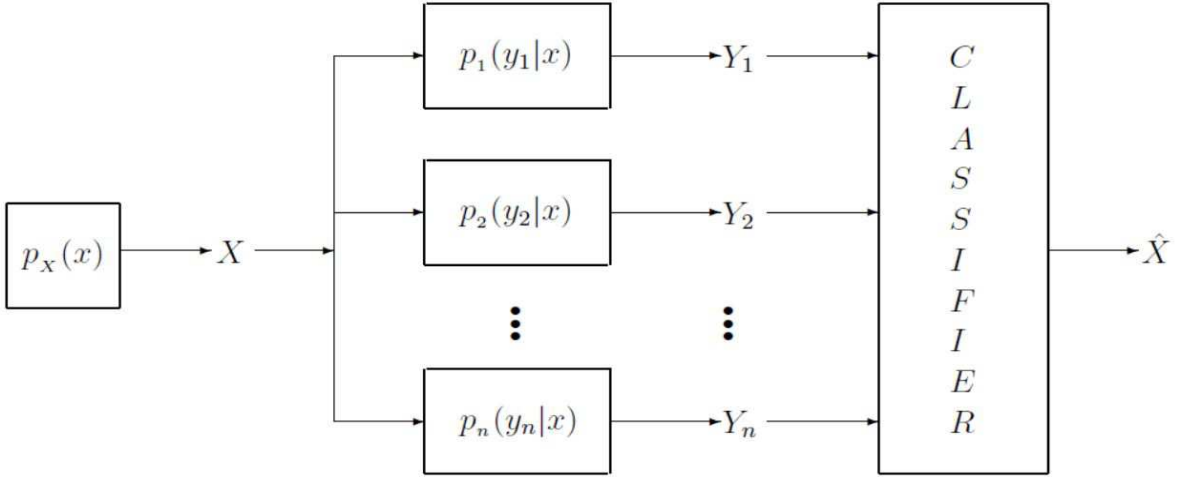}
	\caption{An alternative modelling of the classification learning problem when the elements of $Y^n(X)$ are mutually independent but non-identically distributed (i.non-i.d.).} \label{fig:1.3}
\end{figure}
Next, we assume that $p_{_i}(y_i|x)$, $\forall i$, and thereby $p_{_{Y^n|X}}(y^n|x)$, are unknown to the classifier. Instead, the classifier knows $\mathcal{X}$, $\mathcal{Y}$, and for each $x_i\in\mathcal{X}$, where $i=1,2,\ldots,|\mathcal{X}|$, it has access to a finite set of $t$ correctly labelled input-output pairs $(x_i,\hat{y}_{i_1}^n), (x_i,\hat{y}_{i_2}^n),\ldots, (x_i,\hat{y}_{i_t}^n)$, denoted by $\mathcal{T}_i$, referred to as the training set for label $x_i$. 

For the  classification system model defined above and illustrated in Fig.~\ref{fig:1.3}, we wish to propose a classifier that exhibits an asymptotically optimal error probability $\mathbb{P}_{\e}=\pr\big\{X\neq\hat{X}\big\}$     with respect to the length of $Y^n$, $n$, for any $t\geq 1$, i.e., for any $t\geq 1$, $\mathbb{P}_{\e}\to 0$ as $n\to\infty$. Moreover, we wish to obtain an analytical upper bound on the error probability of the proposed classifier for a given $t$ and $n$.

\section{The Proposed Classifier and its Performance}\label{sec3}
In this section, we propose our classifier, derive an analytical upper bound on its error probability, and prove that the classifier exhibits an asymptotically optimal performance when the length of the feature vector $Y^n$, $n$, satisfies $n\to\infty$. This is done in the following.

For  given vectors $\mathrm{v}^n=(v_1,v_2,\ldots,v_n)$ and $\mathrm{u}^n=(u_1,u_2,\ldots,u_n)$  let the Minkowski distance $r$ be defined as 
\begin{align}
	\big\lVert \mathrm{v}-\mathrm{u}\big\rVert_r = \bigg(\sum_{i=1}^n (v_i-u_i)^r\bigg)^{(1/r)}.
\end{align}
Also, for a given feature vector $y^k=(y_1, y_2,,\ldots,y_k)$, let $\mathcal{I}[y^k=y]$ be a function defined as
\begin{align}\label{eq_e2}
	\mathcal{I}[y^k=y]=\sum_{i=1}^{k}\mathcal{Z}[y_i=y],	
\end{align}
where $\mathcal{Z}[y_i=y]$ is an indicator function assuming the value $1$ if $y_i=y$ and $0$ otherwise. Hence, $\mathcal{I}[y^k=y]$ counts the number of elements in $Y^k$ that have the value $y$.

\subsection{The Proposed Classifier}

Let $\hat{y}^{nt}_i$ be a vector obtained by concatenating all training feature vectors for the input label $x_i$ as 
\begin{align}\label{eq_e3}
\hat{y}^{nt}_i=\Big(\hat{y}^n_{i_1}, \hat{y}^n_{i_2}, \ldots,\hat{y}^n_{i_t}\Big).
\end{align}
Let $P_{\hat{y}^{nt}_i}$ be the empirical probability distribution of the concatenated training feature vector for label $x_i$, $\hat{y}^{nt}_i$, given by
\begin{align}\label{eq_e4}
P_{\hat{y}^{nt}_i}=\Bigg[\dfrac{\mathcal{I}\big[\hat{y}_i^{nt}=y_1\big]}{nt},\dfrac{\mathcal{I}\big[\hat{y}^{nt}_i=y_2\big]}{nt},\ldots,\dfrac{\mathcal{I}\big[\hat{y}^{nt}_i=y_{|\mathcal{Y}|}\big]}{nt}\Bigg].
\end{align}
Let $y^n$ be the observed feature vector at the classifier whose label the classifier wants to detect and let $P_{y^n}$ denote the empirical probability distribution of $y^n$, given by
\begin{align}\label{eq_e5}
P_{y^{n}}=\Bigg[\dfrac{\mathcal{I}\big[y^n=y_1\big]}{n},\dfrac{\mathcal{I}\big[y^n=y_2\big]}{n},\ldots,\dfrac{\mathcal{I}\big[y^n=y_{|\mathcal{Y}|}\big]}{n}\Bigg].
\end{align}
Using the above notations, we propose the following classifier.
\begin{por}
	For the considered system model, we propose a classifier with the following classification rule
	\begin{align}\label{eq_e6}
		\hat{x} =\arg\min_{x_i}\big\lVert P_{y^n}-P_{\hat{y}_i^{nt}}\big\rVert_r,
	\end{align}
	where $r\geq1$ and ties are resolved by assigning  the label among the ties uniformly at random.
\end{por}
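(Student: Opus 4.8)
The statement proposes the nearest--empirical--distribution rule, and the property that must be established for it is the asymptotic optimality promised in the abstract, namely that $\mathbb{P}_{\e}\to 0$ as $n\to\infty$ for every fixed $t\geq 1$, together with an explicit upper bound on $\mathbb{P}_{\e}$. The plan is to exploit the single structural feature that makes the i.non-i.d.\ model tractable: although the per--coordinate laws $p_i(\cdot|x)$ vary with $i$, the empirical distribution of a length--$n$ vector with independent coordinates concentrates around the \emph{average} per--coordinate distribution. Accordingly, I would first define, for each label $x$,
\begin{align}
	\bar{p}_{x}(y)\;=\;\frac{1}{n}\sum_{i=1}^{n}p_{_i}(y|x),\qquad y\in\mathcal{Y},
\end{align}
and argue that $\bar{p}_{x}$ plays the role that a single generating distribution plays in ordinary hypothesis testing. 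The key observation is that both the test type $P_{y^n}$ (when the true label is $x_j$) and the concatenated training type $P_{\hat{y}_k^{nt}}$ have coordinate means exactly equal to $\bar{p}_{x_j}$ and $\bar{p}_{x_k}$ respectively, since every indicator $\mathcal{Z}[\,\cdot=y\,]$ entering \eqref{eq_e4} and \eqref{eq_e5} has the corresponding $p_{_i}(y|x)$ as its mean and all such indicators are mutually independent.

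Next I would reduce the classification error to a concentration--failure event. Conditioning on the true label $x_j$, an error under \eqref{eq_e6} requires some competing label $x_k\neq x_j$ with $\lVert P_{y^n}-P_{\hat{y}_k^{nt}}\rVert_r\leq\lVert P_{y^n}-P_{\hat{y}_j^{nt}}\rVert_r$. Writing $\delta=\min_{x_k\neq x_j}\lVert \bar{p}_{x_j}-\bar{p}_{x_k}\rVert_r$ for the minimum pairwise separation of the average distributions, two applications of the triangle inequality give, on the one hand, $\lVert P_{y^n}-P_{\hat{y}_j^{nt}}\rVert_r\leq \lVert P_{y^n}-\bar{p}_{x_j}\rVert_r+\lVert \bar{p}_{x_j}-P_{\hat{y}_j^{nt}}\rVert_r$, and on the other, $\lVert P_{y^n}-P_{\hat{y}_k^{nt}}\rVert_r\geq \delta-\lVert P_{y^n}-\bar{p}_{x_j}\rVert_r-\lVert \bar{p}_{x_k}-P_{\hat{y}_k^{nt}}\rVert_r$. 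Hence, provided every one of the finitely many deviation terms $\lVert P_{y^n}-\bar{p}_{x_j}\rVert_r$ and $\lVert \bar{p}_{x_k}-P_{\hat{y}_k^{nt}}\rVert_r$ is below $\delta/4$, the correct label strictly minimizes the distance and no error occurs. Thus $\mathbb{P}_{\e}$ is dominated by the probability that at least one type deviates from its average by more than $\delta/4$.

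The remaining step is a quantitative concentration bound. Each coordinate of $P_{y^n}-\bar{p}_{x_j}$ is an average of $n$ independent, mean--zero, $[-1,1]$--bounded terms, so Hoeffding's inequality yields $\pr\{|P_{y^n}(y)-\bar{p}_{x_j}(y)|>\epsilon\}\leq 2\,\e^{-2n\epsilon^2}$, and likewise $\pr\{|P_{\hat{y}_k^{nt}}(y)-\bar{p}_{x_k}(y)|>\epsilon\}\leq 2\,\e^{-2nt\epsilon^2}$ because the training type is built from $nt$ independent indicators. Since $\mathcal{Y}$ and $\mathcal{X}$ are finite and all $\ell_r$ norms on $\mathbb{R}^{|\mathcal{Y}|}$ are equivalent (e.g.\ $\lVert v\rVert_r\leq|\mathcal{Y}|^{1/r}\lVert v\rVert_\infty$ for $r\geq1$), a union bound over the $|\mathcal{Y}|$ alphabet symbols and the $|\mathcal{X}|$ labels converts the coordinatewise bounds with $\epsilon\asymp \delta/(4|\mathcal{Y}|^{1/r})$ into a single estimate of the form $\mathbb{P}_{\e}\leq C\,\e^{-c\,n\,\delta^2}$, with $C$ and $c$ depending only on $|\mathcal{X}|,|\mathcal{Y}|$ and $r$. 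This vanishes as $n\to\infty$ for every fixed $t\geq1$; in particular the rate is already governed by $n$ rather than by $t$, which is precisely why $t=1$ suffices.

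The main obstacle, and the place where the argument must be pinned down with care, is the separation quantity $\delta$. The nearest--empirical--distribution rule can only resolve labels through their average statistics $\bar{p}_{x}$, so the decay to zero genuinely requires $\delta$ to be bounded away from zero (two labels sharing the same average per--coordinate law are information--theoretically indistinguishable to this classifier); I would therefore need either an explicit assumption that the $\bar{p}_{x}$ are distinct with a margin independent of $n$, or a careful statement of how the bound degrades when $\delta=\delta(n)$ shrinks. The secondary technical point is ensuring the triangle--inequality reduction and the union bound are matched so that the per--coordinate tolerance $\epsilon$ and the norm--equivalence constants combine to leave a strictly positive exponent; this is routine once $\delta>0$ is fixed, but it is where the finiteness of $\mathcal{Y}$ and the choice $r\geq1$ are actually used.
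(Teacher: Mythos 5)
Your proposal is correct, and its skeleton is the same as the paper's: concentrate both the test type and the concatenated training type around the averaged per-coordinate law $\bar{\mathrm{p}}(\cdot|x)$, separate labels via the triangle (Minkowski) inequality in $\ell_r$, and finish with Hoeffding plus a union bound over $\mathcal{Y}$. The differences are still worth recording. First, the paper's Theorem~\ref{th2} states a \emph{data-dependent} bound: its $\epsilon$ in \eqref{eq_e10} is defined through $\lVert P_{\hat{y}^{nt}_i}-\bar{\mathrm{P}}_j\rVert_r$, the distance from the random training types to the average law, so only the true label's training vector needs to concentrate; your bound, phrased through the deterministic separation $\delta=\min_{i\neq j}\lVert\bar{\mathrm{P}}_i-\bar{\mathrm{P}}_j\rVert_r$ together with a union over all $|\mathcal{X}|$ training types, is essentially the paper's Appendix proof of Corollary~2, where the term $2|\mathcal{X}||\mathcal{Y}|\e^{-2nt^{1/3}\epsilon^2}$ appears in \eqref{eq_e77}. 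Second, you apply Hoeffding's inequality directly to the independent, non-identically distributed indicators; the paper instead first invokes Hoeffding's 1956 comparison theorem (its Theorem~\ref{th4}) to replace the non-identical Bernoulli sum by a binomial with the averaged success probability, and only then applies the 1963 inequality. Since that inequality (the paper's Theorem~\ref{th3}) already permits non-identical bounded summands, your shortcut is a genuine and legitimate simplification. Third, by keeping the training tolerance at $\delta/4$ rather than shrinking it by $t^{-1/3}$ as in \eqref{eq_e17}, you get the exponent $nt\epsilon^2$ in place of $nt^{1/3}\epsilon^2$, i.e., a slightly tighter bound for $t>1$. Finally, your explicit caveat that $\delta$ must remain bounded away from zero (labels with identical average per-coordinate laws are indistinguishable to this classifier) is a hypothesis the paper leaves implicit: in \eqref{eq_e79} the quantity $\min_{i\neq j}\lVert\bar{\mathrm{P}}_i-\bar{\mathrm{P}}_j\rVert_r$ is silently treated as a constant as $n\to\infty$, so making the assumption explicit, or tracking how the bound degrades when $\delta=\delta(n)$ shrinks, is an improvement in rigor over the paper's own presentation.
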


As seen from \eqref{eq_e6}, the proposed classifier assigns the label $x_i$ if the empirical probability distribution of the concatenated training feature vector mapped to label $x_i$, $P_{\hat{y}_i^{nt}}$, is the closest, in terms of Minkowski distance $r$, to the empirical probability distribution of the observed feature vector $P_{y^n}$. In that sense, the proposed classifier can be considered as the nearest empirical distribution classifier.

\subsection{Upper Bound On The Error Probability}

The following theorem establishes an upper bound on the error probability of the proposed classifier. 
\begin{theorem}\label{th2}
	Let $\bar{\mathrm{P}}_j$, for $j=1,2,\ldots,|\mathcal{X}|$, be a vector defined as
	\begin{align}\label{eq_e7}
	\bar{\mathrm{P}}_j = \Big[\bar{\mathrm{p}}\big(y_1\big|x_j\big),\bar{\mathrm{p}}\big(y_2\big|x_j\big),\ldots,\bar{\mathrm{p}}\big(y_{|\mathcal{Y}|}\big|x_j\big)\Big],
	\end{align}
	where $\bar{\mathrm{p}}(y|x_j)$ is given by
	\begin{align}\label{eq_e8}
	\bar{\mathrm{p}}(y|x_j)=\dfrac{1}{n}\sum_{k=1}^n p_{_k}(y|x_j).
	\end{align}
	Then, for a given $r\geq1$, the error probability of the proposed classifier is upper bounded by
	\begin{align}\label{eq_e9}
	\mathbb{P}_\e\leq 2|\mathcal{Y}|\e^{-2n\epsilon^2}+2|\mathcal{Y}|\e^{-2nt^{1/3}\epsilon^2},
	\end{align}
	where $\epsilon$ is given by
	\begin{align}\label{eq_e10}
	\epsilon = \min_{\substack{i,j\\i\neq j}}\dfrac{\big\lVert P_{\hat{y}^{nt}_i}-\bar{\mathrm{P}}_j\big\rVert_r}{(2+t^{-1/3})|\mathcal{Y}|^{1/r}}.
	\end{align}
\end{theorem}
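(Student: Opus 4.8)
The plan is to use the averaged conditional distribution $\bar{\mathrm{P}}_j$ as an anchor around which the relevant empirical distributions concentrate, and then to show that a misclassification forces at least one of these empirical distributions to deviate from its anchor by an amount that Hoeffding's inequality makes exponentially unlikely. First I would condition on the event that the true label is $x_j$, so that the observed vector $y^n$ is drawn from $\prod_{k=1}^n p_{_k}(\cdot|x_j)$ and the matched training vector $\hat{y}^{nt}_j$ is a concatenation of $t$ such independent draws. The crucial observation is that each indicator $\mathcal{Z}[y_k=y]$ is Bernoulli with mean $p_{_k}(y|x_j)$, so by \eqref{eq_e8} the per-symbol empirical frequency $P_{y^n}(y)$ has expectation exactly $\bar{\mathrm{p}}(y|x_j)$; the same holds for $P_{\hat{y}^{nt}_j}(y)$, whose $nt$ contributing indicators are mutually independent and share the same per-position means. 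Thus both $P_{y^n}$ and $P_{\hat{y}^{nt}_j}$ concentrate around $\bar{\mathrm{P}}_j$, the former built from $n$ samples and the latter from $nt$ samples.

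Next I would carry out the separation argument. Writing $a=\big\lVert P_{y^n}-\bar{\mathrm{P}}_j\big\rVert_r$ and $b_j=\big\lVert P_{\hat{y}^{nt}_j}-\bar{\mathrm{P}}_j\big\rVert_r$, the triangle inequality bounds the distance to the correct label by $\big\lVert P_{y^n}-P_{\hat{y}^{nt}_j}\big\rVert_r\le a+b_j$, while the reverse triangle inequality bounds the distance to any competing label $i\neq j$ from below by $\big\lVert P_{y^n}-P_{\hat{y}^{nt}_i}\big\rVert_r\ge \big\lVert P_{\hat{y}^{nt}_i}-\bar{\mathrm{P}}_j\big\rVert_r-a$. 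Hence no error can occur as long as $\big\lVert P_{\hat{y}^{nt}_i}-\bar{\mathrm{P}}_j\big\rVert_r>2a+b_j$ for every $i\neq j$. Since \eqref{eq_e10} guarantees $\big\lVert P_{\hat{y}^{nt}_i}-\bar{\mathrm{P}}_j\big\rVert_r\ge(2+t^{-1/3})|\mathcal{Y}|^{1/r}\epsilon$, a misclassification forces $2a+b_j\ge(2+t^{-1/3})|\mathcal{Y}|^{1/r}\epsilon$, which in turn forces either $a\ge|\mathcal{Y}|^{1/r}\epsilon$ or $b_j\ge t^{-1/3}|\mathcal{Y}|^{1/r}\epsilon$. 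This is precisely the threshold split that produces the two terms in \eqref{eq_e9}, and it explains the otherwise mysterious constant $(2+t^{-1/3})$ in the denominator of $\epsilon$.

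It then remains to bound the two deviation probabilities. For a vector $u$ of length $|\mathcal{Y}|$ one has $\big\lVert u\big\rVert_r\ge|\mathcal{Y}|^{1/r}\delta$ only if some coordinate satisfies $|u_y|\ge\delta$, so a union bound over the $|\mathcal{Y}|$ alphabet symbols reduces each Minkowski deviation to a collection of scalar deviations of the empirical frequencies from their means. Applying Hoeffding's inequality to $P_{y^n}(y)$, an average of $n$ independent $[0,1]$-valued indicators with mean $\bar{\mathrm{p}}(y|x_j)$ and deviation level $\epsilon$, yields $\pr\{a\ge|\mathcal{Y}|^{1/r}\epsilon\}\le 2|\mathcal{Y}|\e^{-2n\epsilon^2}$. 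Applying it to $P_{\hat{y}^{nt}_j}(y)$, an average of $nt$ such indicators at deviation level $t^{-1/3}\epsilon$, yields $\pr\{b_j\ge t^{-1/3}|\mathcal{Y}|^{1/r}\epsilon\}\le 2|\mathcal{Y}|\e^{-2nt(t^{-1/3}\epsilon)^2}=2|\mathcal{Y}|\e^{-2nt^{1/3}\epsilon^2}$, where the exponent collapses because $t\cdot t^{-2/3}=t^{1/3}$. Summing the two probabilities and using the uniform prior to average the conditional error bound over $j$ (the bound being the same for every $j$) gives exactly \eqref{eq_e9}.

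I expect the main delicacy to be the bookkeeping of what is held fixed and what is averaged: the quantity $\epsilon$ in \eqref{eq_e10} is itself a function of the training empirical distributions, so one must condition on the competing-label training sets (which enter only through $\epsilon$) while keeping the test vector and the matched-label training vector random, so that Hoeffding applies with a deterministic threshold. A second point requiring care is the reduction from the Minkowski distance to coordinatewise deviations through the factor $|\mathcal{Y}|^{1/r}$, which must hold uniformly in $r\ge1$, together with the verification that all $nt$ indicators entering $P_{\hat{y}^{nt}_j}$ are genuinely mutually independent with the correct common mean $\bar{\mathrm{p}}(y|x_j)$ despite the non-identical per-position distributions. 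Once these are settled, the remaining steps are routine applications of the triangle inequality, the union bound, and Hoeffding's inequality.
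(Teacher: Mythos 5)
Your proposal is correct and follows essentially the same route as the paper's proof: condition on the true label, use $\bar{\mathrm{P}}_j$ as the concentration anchor, turn the classification rule into a separation condition via the Minkowski triangle and reverse-triangle inequalities, reduce the $r$-norm deviations to coordinatewise deviations by a union bound over the $|\mathcal{Y}|$ symbols, and finish with Hoeffding; your threshold split of $2a+b_j$ into the events $a\geq|\mathcal{Y}|^{1/r}\epsilon$ and $b_j\geq t^{-1/3}|\mathcal{Y}|^{1/r}\epsilon$ is exactly the role played by the paper's sets $\mathcal{A}^{\epsilon}$ and $\mathcal{B}^{\epsilon}$. Two remarks. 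First, you make a genuine simplification: you apply Hoeffding's inequality (the paper's Theorem \ref{th3}) directly to the sums of independent, non-identically distributed indicators, using the fact that their average success probability is exactly $\bar{\mathrm{p}}(y|x_j)$; the paper instead first invokes Hoeffding's dispersion-comparison theorem (its Theorem \ref{th4}) to replace these sums by binomial variables and only then applies Theorem \ref{th3}. Since Theorem \ref{th3} requires only independence and boundedness, not identical distributions, your shortcut is valid and renders the comparison theorem unnecessary. Second, on the delicacy you flag: your claim that conditioning on the competing-label training sets makes the Hoeffding thresholds deterministic is not quite enough, because the $\epsilon$ of \eqref{eq_e10} is a minimum over all ordered pairs $(i,j)$ and hence also depends on the matched-label training vector $\hat{y}^{nt}_j$ (through the terms $\big\lVert P_{\hat{y}^{nt}_j}-\bar{\mathrm{P}}_{j'}\big\rVert_r$ with $j'\neq j$), so the event $\big\{b_j\geq t^{-1/3}|\mathcal{Y}|^{1/r}\epsilon\big\}$ has $\hat{y}^{nt}_j$ on both sides. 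The repair is what the paper implicitly does in \eqref{eq_e31}: run the whole argument with the larger, per-true-label threshold $\min_{i\neq j}\big\lVert P_{\hat{y}^{nt}_i}-\bar{\mathrm{P}}_j\big\rVert_r\big/\big((2+t^{-1/3})|\mathcal{Y}|^{1/r}\big)$, which depends only on the competing training sets, and only at the end pass to the smaller $\epsilon$ of \eqref{eq_e10}, using the fact that the resulting bound is monotone in the threshold. With that adjustment your argument is a complete, and in fact slightly cleaner, version of the paper's proof.
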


\begin{proof}
    Without loss of generality we assume that $x_1$ is the input to $p_{_{Y^n|X}}(y^n|x)$ and $y^n$ is observed. 
	
	Let $\mathcal{A}^{\epsilon}_k$, for $1\leq k\leq|\mathcal{Y}|$, be a set defined as
	\begin{align}\label{eq_e16}
	\mathcal{A}^{\epsilon}_k=\Bigg\{y^n:\bigg|\dfrac{\mathcal{I}\big[y^n=y_k\big]}{n}- \bar{\mathrm{p}}(y_k|x_1)\bigg|\leq\epsilon\Bigg\}.
	\end{align}
	
	Also, let $\mathcal{B}^{\epsilon}_k$, for $1\leq k\leq|\mathcal{Y}|$, be a set defined as
	\begin{align}\label{eq_e17}
	\mathcal{B}^{\epsilon}_k=\Bigg\{\hat{y}^{nt}:\bigg|\dfrac{\mathcal{I}\big[\hat{y}^{nT}=y_k\big]}{nt}- \bar{\mathrm{p}}(y_k|x_1)\bigg|\leq\frac{\epsilon}{\sqrt[3]{t}}\Bigg\}.
	\end{align}
	
	Let $\mathcal{A}^{\epsilon}=\bigcap\limits_{k=1}^{|\mathcal{Y}|} \mathcal{A}^{\epsilon}_k$ and $\mathcal{B}^{\epsilon}=\bigcap\limits_{k=1}^{|\mathcal{Y}|} \mathcal{B}^{\epsilon}_k$. Now, for any $y^n\in\mathcal{A}^{\epsilon}$, we have
	\begin{align}\label{eq_e18}
	\Bigg(\sum_{k=1}^{|\mathcal{Y}|}\bigg|\dfrac{\mathcal{I}[y^n=y_k]}{n}-\bar{\mathrm{p}}(y_k|x_1)\bigg|^{r}\Bigg)^{1/r}\myleqa \Bigg(\sum_{k=1}^{|\mathcal{Y}|}\epsilon^r\Bigg)^{1/r},
	\end{align}
	where $(a)$ follows from \eqref{eq_e16}. Moreover, for $\hat{y}_1^{nt}\in\mathcal{B}^{\epsilon}$, we have
	\begin{align}\label{eq_e19}
	\Bigg(\sum_{k=1}^{|\mathcal{Y}|}\bigg|\dfrac{\mathcal{I}[\hat{y}^{nt}_1=y_k]}{nt}-\bar{\mathrm{p}}(y_k|x_1)\bigg|^{r}\Bigg)^{1/r}\myleqa \Bigg(\sum_{k=1}^{|\mathcal{Y}|}\bigg(\frac{\epsilon}{\sqrt[3]{t}}\bigg)^r\Bigg)^{1/r},
	\end{align} 
	where $(a)$ follows from \eqref{eq_e17}. Next, we have the following upper bound
	\begin{align}\label{eq_e20}
	&\Bigg(\sum_{k=1}^{|\mathcal{Y}|}\bigg|\dfrac{\mathcal{I}[y^n=y_k]}{n}-\dfrac{\mathcal{I}[\hat{y}_1^{nt}=y_k]}{nt}\bigg|^{r}\Bigg)^{1/r}\nonumber\\&=\Bigg(\sum_{k=1}^{|\mathcal{Y}|}\bigg|\dfrac{\mathcal{I}[y^n=y_k]}{n}-\bar{\mathrm{p}}(y_k|x_1)-\bigg(\dfrac{\mathcal{I}[\hat{y}^{nt}_1=y_k]}{nt}-\bar{\mathrm{p}}(y_k|x_1)\bigg)\bigg|^{r}\Bigg)^{1/r}\nonumber\\&\myleqa\Bigg(\sum_{k=1}^{|\mathcal{Y}|}\bigg|\dfrac{\mathcal{I}[y^n=y_k]}{n}-\bar{\mathrm{p}}(y_k|x_1)\bigg|^{r}\Bigg)^{1/r}+\Bigg(\sum_{k=1}^{|\mathcal{Y}|}\bigg|\dfrac{\mathcal{I}[\hat{y}^{nt}_1=y_k]}{nt}-\bar{\mathrm{p}}(y_k|x_1)\bigg|^{r}\Bigg)^{1/r},
	\end{align}
	where $(a)$ follows from the Minkowski inequality. 
	Combining \eqref{eq_e18}, \eqref{eq_e19}, and \eqref{eq_e20}, we obtain
	\begin{align}\label{eq_e21}
		\Bigg(\sum_{k=1}^{|\mathcal{Y}|}\bigg|\dfrac{\mathcal{I}[y^n=y_k]}{n}-\dfrac{\mathcal{I}[\hat{y}_1^{nt}=y_k]}{nt}\bigg|^{r}\Bigg)^{1/r}\leq |\mathcal{Y}|^{1/r}\epsilon+|\mathcal{Y}|^{1/r}\frac{\epsilon}{\sqrt[3]{t}}.
	\end{align}
	Hence, the Minkowski distance between the empirical probability distribution of the observed vector $y^n$ and the empirical probability distribution of the concatenated training vector for label $x_1$ is upper bounded by the right hand side of \eqref{eq_e21}. We now derive a lower bound for $\hat{y}_i^{nt}$, where $i\neq1$. For any $x_i$, such that $i\neq 1$, we have
	\begin{align}\label{eq_e22}
	&\Bigg(\sum_{k=1}^{|\mathcal{Y}|}\bigg|\dfrac{\mathcal{I}[y^n=y_k]}{n}-\dfrac{\mathcal{I}[\hat{y}_i^{nt}=y_k]}{nt}\bigg|^{r}\Bigg)^{1/r}+	\Bigg(\sum_{k=1}^{|\mathcal{Y}|}\epsilon^r\Bigg)^{1/r}\nonumber\\
	&\mygeqa\Bigg(\sum_{k=1}^{|\mathcal{Y}|}\bigg|\dfrac{\mathcal{I}[y^n=y_k]}{n}-\dfrac{\mathcal{I}[\hat{y}_i^{nt}=y_k]}{nt}\bigg|^{r}\Bigg)^{1/r}+\Bigg(\sum_{k=1}^{|\mathcal{Y}|}\bigg|\dfrac{\mathcal{I}[y^n=y_k]}{n}-\bar{\mathrm{p}}(y_k|x_1)\bigg|^{r}\Bigg)^{1/r}\nonumber\\
	&\mygeqb \Bigg(\sum_{k=1}^{|\mathcal{Y}|}\bigg|\dfrac{\mathcal{I}[\hat{y}^{nt}_i=y_k]}{nt}-\bar{\mathrm{p}}(y_k|x_1)\bigg|^{r}\Bigg)^{1/r},
	\end{align}
	where $(a)$ follows from \eqref{eq_e18} and $(b)$ is again due to the Minkowski inequality. The expression in \eqref{eq_e22}, can be written equivalently as
	\begin{align}\label{eq_e23}
		&\Bigg(\sum_{k=1}^{|\mathcal{Y}|}\bigg|\dfrac{\mathcal{I}[y^n=y_k]}{n}-\dfrac{\mathcal{I}[\hat{y}_i^{nt}=y_k]}{nt}\bigg|^{r}\Bigg)^{1/r}\nonumber\\&\geq\Bigg(\sum_{k=1}^{|\mathcal{Y}|}\bigg|\dfrac{\mathcal{I}[\hat{y}^{nt}_i=y_k]}{nt}-\bar{\mathrm{p}}(y_k|x_1)\bigg|^{r}\Bigg)^{1/r}-|\mathcal{Y}|^{1/r}\epsilon,
	\end{align}
	where $i\neq1$. Now, using the definitions of $P_{\hat{y}^{nt}_i}$ and $\bar{\mathrm{P}}_1$ given by \eqref{eq_e4} and \eqref{eq_e7}, respectively, into \eqref{eq_e23} we can replace the expression in the right-hand side of \eqref{eq_e23} by $\big\lVert P_{\hat{y}^{nt}_i}-\bar{\mathrm{P}}_1\big\rVert_r$, and thereby for any $i\neq1$ we have
	\begin{align}\label{eq_e24}
		\Bigg(\sum_{k=1}^{|\mathcal{Y}|}\bigg|\dfrac{\mathcal{I}[y^n=y_k]}{n}-\dfrac{\mathcal{I}[\hat{y}_i^{nt}=y_k]}{nt}\bigg|^{r}\Bigg)^{1/r}\geq &\big\lVert P_{\hat{y}^{nt}_i}-\bar{\mathrm{P}}_1\big\rVert_r-|\mathcal{Y}|^{1/r}\epsilon.
	\end{align}
	The expression in \eqref{eq_e24} represents a lower bound on the Minkowski $r$ distance between the empirical probability distribution of the observed vector $y^n$ and the empirical probability distribution of the concatenated training vector for any label $x_i$, where $i\neq1$.
	
	Using the bounds in \eqref{eq_e21} and \eqref{eq_e24}, we now relate the left-hand sides of \eqref{eq_e21} and \eqref{eq_e24} as follows. As long as the following inequality holds for each $i\neq1$,
	\begin{align}\label{eq_e25}
	|\mathcal{Y}|^{1/r}\epsilon\bigg(1+\frac{1}{\sqrt[3]{t}}\bigg)<\lVert P_{\hat{y}^{nt}_i}-\bar{\mathrm{P}}_1\big\rVert_r-|\mathcal{Y}|^{1/r}\epsilon,
	\end{align}
	which is equivalent to the following for $i\neq1$
	\begin{align}\label{eq_e26}
		\epsilon<\dfrac{\big\lVert P_{\hat{y}^{nt}_i}-\bar{\mathrm{P}}_1\big\rVert_r}{(2+t^{-1/3})|\mathcal{Y}|^{1/r}},
	\end{align}
	we have the following for $i\neq1$
	\begin{align}\label{eq_e27}
		\Bigg(\sum_{k=1}^{|\mathcal{Y}|}\bigg|\dfrac{\mathcal{I}[y^n=y_k]}{n}-\dfrac{\mathcal{I}[\hat{y}_1^{nt}=y_k]}{nt}\bigg|^{r}\Bigg)^{1/r}&\myleqa 	|\mathcal{Y}|^{1/r}\epsilon\bigg(1+\frac{1}{\sqrt[3]{t}}\bigg)\nonumber\\
		&\myleqqb \lVert P_{\hat{y}^{nt}_i}-\bar{\mathrm{P}}_1\big\rVert_r-|\mathcal{Y}|^{1/r}\epsilon\nonumber\\&\myleqc 	\Bigg(\sum_{k=1}^{|\mathcal{Y}|}\bigg|\dfrac{\mathcal{I}[y^n=y_k]}{n}-\dfrac{\mathcal{I}[\hat{y}_i^{nt}=y_k]}{nt}\bigg|^{r}\Bigg)^{1/r},
	\end{align}
	where $(a)$, $(b)$, and $(c)$ follow from \eqref{eq_e21}, \eqref{eq_e25}, and \eqref{eq_e24}, respectively. Thereby, from \eqref{eq_e27}, we have the following for $i\neq1$
	\begin{align}\label{eq_e28}
		\Bigg(\sum_{k=1}^{|\mathcal{Y}|}\bigg|\dfrac{\mathcal{I}[y^n=y_k]}{n}-\dfrac{\mathcal{I}[\hat{y}_1^{nT}=y_k]}{nT}\bigg|^{r}\Bigg)^{1/r}\leq\Bigg(\sum_{k=1}^{|\mathcal{Y}|}\bigg|\dfrac{\mathcal{I}[y^n=y_k]}{n}-\dfrac{\mathcal{I}[\hat{y}_i^{nT}=y_k]}{nT}\bigg|^{r}\Bigg)^{1/r}.
	\end{align}
	Note that the right- and left-hand sides of \eqref{eq_e28} can be replaced by the Minkowski distance of the vectors
	\begin{align}\label{kk1}
		  \Bigg[\dfrac{\mathcal{I}\big[y^n=y_1\big]}{n}-\dfrac{\mathcal{I}\big[\hat{y}_1^{nt}=y_1\big]}{nt},\ldots,\dfrac{\mathcal{I}\big[y^n=y_{|\mathcal{Y}|}\big]}{n}-\dfrac{\mathcal{I}\big[\hat{y}^{nt}_1=y_{|\mathcal{Y}|}\big]}{nt}\Bigg] 
	\end{align}
	and
	\begin{align}\label{kk2}
		  \Bigg[\dfrac{\mathcal{I}\big[y^n=y_1\big]}{n}-\dfrac{\mathcal{I}\big[\hat{y}_i^{nt}=y_1\big]}{nt},\ldots,\dfrac{\mathcal{I}\big[y^n=y_{|\mathcal{Y}|}\big]}{n}-\dfrac{\mathcal{I}\big[\hat{y}^{nt}_i=y_{|\mathcal{Y}|}\big]}{nt}\Bigg],
	\end{align}
	respectively. Now, \eqref{kk1} and \eqref{kk2} can be replaced by
	$ P_{y^n}-P_{\hat{y}_1^{nt}}$ and $ P_{y^n}-P_{\hat{y}_i^{nt}}$, respectively, by the definitions of $P_{y^n}$ and $P_{\hat{y}^{nt}_i}$ given by \eqref{eq_e5} and \eqref{eq_e4}, respectively.
	Therefore, \eqref{eq_e28} can be written equivalently as  
	\begin{align}\label{eq_e29}
		\big\lVert P_{y^n}-P_{\hat{y}_1^{nt}}\big\rVert_r<\big\lVert P_{y^n}-P_{\hat{y}_i^{nt}}\big\rVert_r.
	\end{align}
	 Now, let us highlight what we have obtained. We obtained that if there is an $\epsilon$ for which \eqref{eq_e26} holds for $i\neq1$, and for that $\epsilon$ there are sets $\mathcal{A}^{\epsilon}$ and $\mathcal{B}^{\epsilon}$ for which $y^n\in\mathcal{A}^{\epsilon}$ and $\hat{y}^{nt}_1\in\mathcal{B}^{\epsilon}$ then \eqref{eq_e29} holds for $i\neq1$, and thereby our classifier will detect that $x_1$ is the correct label. Using this we can upper bound the error probability as
	\begin{align}\label{eq_e30}
	\mathbb{P}_\e&=1-\pr\big\{\hat{x}_1= x_1\big\}\nonumber\\&\leq 1-\pr\Big\{\big(y^n\in\mathcal{A}^{\epsilon}\big)\cap\big(\hat{y}^{nt}_1\in\mathcal{B}^{\epsilon}\big)\big\lvert\epsilon\in\mathcal{S}\Big\},
	\end{align}
	where $\mathcal{S}$ is a set defined as
	\begin{align}\label{eq_e31}
		\mathcal{S}=\Bigg\{\epsilon:\epsilon\leq\min_{\substack{i\\i\neq 1}}\dfrac{\big\lVert P_{\hat{y}^{nt}_i}-\bar{\mathrm{P}}_1\big\rVert_r}{(2+t^{-1/3})|\mathcal{Y}|^{1/r}}\Bigg\}.
	\end{align}
	In the following, we derive the expression in \eqref{eq_e30}. The right-hand side of \eqref{eq_e30} can be upper bounded as
	\begin{align}\label{eq_e32}
	1-\pr\Big\{\big(y^n\in\mathcal{A}^{\epsilon}\big)\cap\big(\hat{y}^{nt}_1\in\mathcal{B}^{\epsilon}\big)\big\lvert\epsilon\in\mathcal{S}\Big\}&=\pr\Big\{\big(y^n\notin\mathcal{A}^{\epsilon}\big)\cup\big(\hat{y}^{nt}_1\notin\mathcal{B}^{\epsilon}\big)\big\lvert\epsilon\in\mathcal{S}\Big\}\nonumber\\
	&\myleqa\pr\big\{y^n\notin\mathcal{A}^{\epsilon}\lvert\epsilon\in\mathcal{S}\big\}+\pr\Big\{\hat{y}^{nt}_1\notin\mathcal{B}^{\epsilon}\big\lvert\epsilon\in\mathcal{S}\Big\},
	\end{align}
	where $(a)$ follows from Boole's inequality.
	Now, note that we have the following upper bound for the first expression in the right-hand side of \eqref{eq_e32}
	\begin{align}\label{eq_e33}
		\pr\big\{y^n\notin\mathcal{A}^{\epsilon}\lvert\epsilon\in\mathcal{S}\big\}&=\pr\Bigg\{y^n\notin\bigcap\limits_{k=1}^{|\mathcal{Y}|} \mathcal{A}^{\epsilon}_k\bigg\lvert\epsilon\in\mathcal{S}\Bigg\}\nonumber\\
		&=\pr\Bigg\{y^n\in\bigcup\limits_{k=1}^{|\mathcal{Y}|}\overline{\mathcal{A}^{\epsilon}_k}\bigg\lvert\epsilon\in\mathcal{S}\Bigg\}\nonumber\\&\myleqa\sum_{k=1}^{|\mathcal{Y}|}\pr\big\{y^n\in\overline{\mathcal{A}^{\epsilon}_k}\lvert\epsilon\in\mathcal{S}\big\}\nonumber\\&=\sum_{k=1}^{|\mathcal{Y}|}\pr\bigg\{\bigg|\dfrac{\mathcal{I}[y^n=y_k]}{n}-\bar{\mathrm{p}}(y_k|x_1)\bigg|>\epsilon\bigg\lvert\epsilon\in\mathcal{S}\bigg\}\nonumber\\&=\sum_{k=1}^{|\mathcal{Y}|}\pr\Bigg\{\Bigg|\sum_{j=1}^{n}\dfrac{\mathcal{Z}[y_j=y_k]}{n}-\bar{\mathrm{p}}(y_k|x_1)\Bigg|>\epsilon\bigg\lvert\epsilon\in\mathcal{S}\Bigg\},
	\end{align}
	where $\overline{\mathcal{A}^{\epsilon}_k}$ is the complement of $\mathcal{A}^{\epsilon}_k$ and $(a)$ follows from Boole's inequality. Note that $\mathcal{Z}[y_1=y_k],\mathcal{Z}[y_2=y_k],\ldots,\mathcal{Z}[y_n=y_k]$ in \eqref{eq_e33} are $n$ independent Bernoulli random variables with probabilities of success $p_{_1}(y_k|x_1),p_{_2}(y_k|x_1),\ldots,p_{_n}(y_k|x_1)$, respectively. Let $\mathcal{W}[y_k]$ be a binomial random variable with parameters $\big(n,\bar{\mathrm{p}}(y_k|x_1)\big)$. 
	We proceed the proof by introducing the following well-known Hoefdding's Theorem from \cite{hoeffding1956}.
	\begin{theorem}[Hoeffding \cite{hoeffding1956}]\label{th4}
		Assume that $Z_1,Z_2,\ldots,$ and $Z_n$ are $n$ independent Bernoulli random variables with probabilities of success $p_{_1},p_{_2},\ldots,$ and $p_{_n}$, respectively. Next, let $\mathrm{Z}$ be defined as $\mathrm{Z}=Z_1+Z_2+\ldots+Z_n$ and, let $\bar{\mathrm{p}}$ be defined as $\bar{\mathrm{p}}=\big(p_{_1}+p_{_2}+\ldots+p_{_n}\big)/n$. Let $\mathrm{W}$ be a binomial random variable with parameters $(n,\bar{\mathrm{p}})$. Then, for a given $a$ and $b$, where $0\leq a\leq n\bar{\mathrm{p}}\leq b\leq n$ holds, we have
		\begin{align}\label{eq_e13}
		\pr\big\{a\leq \mathrm{W}\leq b\big\}\leq\pr\big\{a\leq \mathrm{Z}\leq b\big\}.
		\end{align}
		In other words, the probability distribution of $\mathrm{W}$ is more dispersed around its mean $n\bar{\mathrm{p}}$ than is the probability distribution of $\mathrm{Z}$. 	Except in the trivial case when $a=b=0$, the bound in \eqref{eq_e13} holds with equality if and only if $p_1=\ldots=p_n=\bar{\mathrm{p}}$. 
	\end{theorem}
	\begin{proof}
		Please refer to \cite{hoeffding1956}.
	\end{proof}
	Setting $a=n(\bar{\mathrm{p}}-\delta)$ and $b=n(\bar{\mathrm{p}}+\delta)$ in \eqref{eq_e13}, we obtain
	\begin{align}\label{eq_e14}
	\pr\big\{n(\bar{\mathrm{p}}-\delta)\leq \mathrm{W}\leq n(\bar{\mathrm{p}}+\delta)\big\}\leq\pr\big\{n(\bar{\mathrm{p}}-\delta)\leq \mathrm{Z}\leq n(\bar{\mathrm{p}}+\delta)\big\}.
	\end{align}
	Using \eqref{eq_e14}, we have the following upper bound
	\begin{align}\label{eq_e15}
	\pr\bigg\{\bigg|\dfrac{\mathrm{Z}}{n}-\bar{\mathrm{p}}\bigg|>\delta\bigg\}&=1-\pr\big\{n(\bar{\mathrm{p}}-\delta)\leq \mathrm{Z}\leq n(\bar{\mathrm{p}}+\delta)\big\}\nonumber\\&\myleqa 1-\pr\big\{n(\bar{\mathrm{p}}-\delta)\leq \mathrm{W}\leq n(\bar{\mathrm{p}}+\delta)\big\}\nonumber\\&=\pr\bigg\{\bigg|\dfrac{\mathrm{W}}{n}-\bar{\mathrm{p}}\bigg|>\delta\bigg\},
	\end{align}
	where $(a)$ follows from \eqref{eq_e14}.
	
We now turn to the proof of Theorem \ref{th2}. According to Theorem \ref{th4}, the probability distribution of $\mathcal{W}[y_k]$ is more dispersed around its mean $n\bar{\mathrm{p}}(y_k|x_1)$ than is the probability distribution of $\sum_{1\leq j\leq n}\mathcal{Z}[y_j=y_k]$. Therefore, we can upper bound the probability in the last line of \eqref{eq_e33} as
	\begin{align}\label{eq_e34}
		\pr\Bigg\{\Bigg|\sum_{j=1}^{n}\dfrac{\mathcal{Z}[y_j=y_k]}{n}-\bar{\mathrm{p}}(y_k|x_1)\Bigg|>\epsilon\bigg\lvert\epsilon\in\mathcal{S}\Bigg\}\myleqa\pr\bigg\{\bigg|\dfrac{\mathcal{W}[y_k]}{n}-\bar{\mathrm{p}}(y_k|x_1)\bigg|>\epsilon\bigg\lvert\epsilon\in\mathcal{S}\bigg\},
	\end{align}
where $\epsilon\in\mathcal{S}$, defined in \eqref{eq_e31} and $(a)$ follows from \eqref{eq_e15}. Now, let us introduce another well-known Hoeffding's Theorem from \cite{10500830}.
	\begin{theorem}[Hoeffding's inequality \cite{10500830}]\label{th3}
		Let $W_1,W_2,\ldots,W_n$ be $n$ independent random variables such that for each ${1\leq i\leq n}$, we have $\pr\big\{W_i\in[a_i,b_i]\big\}=1$.
		Then for $S_n$, defined as $S_n=\sum\limits_{i=1}^nW_i$, we have
		\begin{align}\label{eq_e11}
		\pr\Big\{S_n-\mathbb{E}\big[S_n\big]\geq\delta\Big\}\leq\exp\Bigg(-\frac{2\delta^2}{\sum_{i=1}^n(b_i-a_i)^2}\Bigg),
		\end{align}
		where $\mathbb{E}\big[S_n\big]$ is the expectation of $S_n$.
	\end{theorem}
	\begin{proof}
		Please refer to \cite{10500830}.
	\end{proof}
	
	Back to \eqref{eq_e34}, by using the result of \eqref{eq_e11} for $a_i=0$ and $b_i=1$ since the binomial random variable $\mathcal{W}[y_k]$ can take values $0$ or $1$, respectively, we have
	
	\begin{align}\label{eq_ekkk}
	    \pr\Bigg\{\Bigg|\sum_{j=1}^{n}\dfrac{\mathcal{Z}[y_j=y_k]}{n}-\bar{\mathrm{p}}(y_k|x_1)\Bigg|>\epsilon\bigg\lvert\epsilon\in\mathcal{S}\Bigg\}&\leq2\exp\Bigg(-\frac{2n^2\epsilon^2}{\sum_{1\leq i\leq n}(1-0)^2}\Bigg)\nonumber\\
		&\leq 2\e^{-2n\epsilon^2},
	\end{align}
	where $\epsilon\in\mathcal{S}$, defined in \eqref{eq_e31}. Inserting \eqref{eq_ekkk} into \eqref{eq_e33}, we obtain the following upper bound
	\begin{align}\label{eq_e35}
		\pr\big\{y^n\notin\mathcal{A}^{\epsilon}\lvert\epsilon\in\mathcal{S}\big\}\leq 2|\mathcal{Y}|\e^{-2n\epsilon^2}.
	\end{align}
Similarly, we have the following result for the second expression in the right-hand side of \eqref{eq_e32}
\begin{align}\label{eq_e36}
\pr\Big\{\hat{y}^{nt}_1\notin\mathcal{B}^{\epsilon}\big\lvert\epsilon\in\mathcal{S}\Big\}&=\pr\Bigg\{\hat{y}^{nt}_1\notin\bigcap\limits_{k=1}^{|\mathcal{Y}|} \mathcal{B}^{\epsilon}_k\bigg\lvert\epsilon\in\mathcal{S}\Bigg\}\nonumber\\
&=\pr\Bigg\{\hat{y}^{nt}_1\in\bigcup\limits_{k=1}^{|\mathcal{Y}|}\overline{\mathcal{B}^{\epsilon}_k}\bigg\lvert\epsilon\in\mathcal{S}\Bigg\}\nonumber\\&\myleqa\sum_{k=1}^{|\mathcal{Y}|}\pr\big\{\hat{y}^{nt}_1\in\overline{\mathcal{B}^{\epsilon}_k}\lvert\epsilon\in\mathcal{S}\big\}\nonumber\\&=\sum_{k=1}^{|\mathcal{Y}|}\pr\bigg\{\bigg|\dfrac{\mathcal{I}[\hat{y}^{nt}_1=y_k]}{nt}-\bar{\mathrm{p}}(y_k|x_1)\bigg|>\frac{\epsilon}{\sqrt[3]{t}}\bigg\lvert\epsilon\in\mathcal{S}\bigg\}\nonumber\\&=\sum_{k=1}^{|\mathcal{Y}|}\pr\Bigg\{\Bigg|\sum_{j=1}^{nt}\dfrac{\mathcal{Z}[y_j=y_k]}{nt}-\bar{\mathrm{p}}(y_k|x_1)\Bigg|>\frac{\epsilon}{\sqrt[3]{t}}\bigg\lvert\epsilon\in\mathcal{S}\Bigg\},
\end{align}
where again $(a)$ follows from Boole's inequality. Note that due to \eqref{eq_e3}, for any integer number $l$ such that $0\leq l\leq t-1$ the random variables $\mathcal{Z}[y_{nl+1}=y_k],\mathcal{Z}[y_{nl+2}=y_k],\ldots,$ and $\mathcal{Z}[y_{nl+n}=y_k]$ in \eqref{eq_e36} are $n$ independent Bernoulli random variables with the probabilities of success $p_{_1}(y_k|x_1),p_{_2}(y_k|x_1),\ldots,$ and $p_{_n}(y_k|x_1)$, respectively $\big(y_{nl+1},y_{nl+2},\ldots,y_{nl+n}$ are elements of $\hat{y}^n_{1_{l+1}}\big)$. Also, note that
	\begin{align}\label{eq_e37}
		\bar{\mathrm{p}}(y_k|x_1)&=\dfrac{1}{n}\sum_{j=1}^{n}p_{_j}(y_k|x_1)\nonumber\\&=\dfrac{1}{nt}\Bigg(\sum_{l=0}^{t-1}\sum_{j=1}^{n}p_{_j}(y_k|x_1)\Bigg).
	\end{align}
	Notice that for each $0\leq l\leq t-1$, $p_{_1}(y_k|x_1)+p_{_2}(y_k|x_1)+\ldots+p_{_n}(y_k|x_1)$ is the summation of the probabilities of success of the random variables $\mathcal{Z}[y_{nl+1}=y_k],\mathcal{Z}[y_{nl+2}=y_k],\ldots,$ and $\mathcal{Z}[y_{nl+n}=y_k]$. Thereby, the last expression in right-hand side of \eqref{eq_e37} is the average probability of success of random variables $\mathcal{Z}[y_j=y_k]$ for $1\leq j\leq nt$. Now, let $\mathcal{W}[y_k]$ be a binomial random variable with parameters $\big(nt,\bar{\mathrm{p}}(y_k|x_1)\big)$. Once again, according to Theorem \ref{th4}, the probability distribution of $\mathcal{W}[y_k]$ is more dispersed around its mean $nt\bar{\mathrm{p}}(y_k|x_1))$ than is the probability distribution of $\sum_{1\leq j\leq nt}\mathcal{Z}[y_j=y_k]$. Therefore, the probability in the last line of \eqref{eq_e36} can be upper bounded as
	\begin{align}\label{eq_e38}
		\pr\Bigg\{\Bigg|\sum_{j=1}^{nt}\dfrac{\mathcal{Z}[y_j=y_k]}{nt}-\bar{\mathrm{p}}(y_k|x_1)\Bigg|>\frac{\epsilon}{\sqrt[3]{t}}\bigg\lvert\epsilon\in\mathcal{S}\Bigg\}&\myleqa\pr\bigg\{\bigg|\dfrac{\mathcal{W}[y_k]}{nt}-\bar{\mathrm{p}}(y_k|x_1)\bigg|>\frac{\epsilon}{\sqrt[3]{t}}\bigg\lvert\epsilon\in\mathcal{S}\bigg\}\nonumber\\
		&\myleqb2\exp\Bigg(-\frac{2(nt)^2\big(t^{-1/3}\epsilon\big)^2}{\sum_{1\leq i\leq nt}(1-0)^2}\Bigg)\nonumber\\
		&\leq 2\e^{-2nt\big(t^{-2/3}\epsilon^2\big)}\nonumber\\
		&=2\e^{-2nt^{1/3}\epsilon^2},
	\end{align}
	where $\epsilon\in\mathcal{S}$, defined in \eqref{eq_e31}, $(a)$ follows from \eqref{eq_e15} (in which $n$ is replaced by $nt$), and $(b)$ is the result of \eqref{eq_e11} for $a_i=0$ and $b_i=1$ since the binomial random variable $\mathcal{W}[y_k]$ can take values $0$ or $1$, respectively. Inserting \eqref{eq_e38} into \eqref{eq_e36}, we have the following upper bound 
	\begin{align}\label{eq_e39}
	\pr\Big\{\hat{y}^{nt}_1\notin\mathcal{B}^{\epsilon}\big\lvert\epsilon\in\mathcal{S}\Big\}\leq 2|\mathcal{Y}|\e^{-2nt^{1/3}\epsilon^2}.
	\end{align}
	Inserting \eqref{eq_e35} and \eqref{eq_e39} into \eqref{eq_e32}, and then inserting \eqref{eq_e32} into \eqref{eq_e30}, we obtain the following upper bound for the error probability
	\begin{align}\label{eq_e40}
	\mathbb{P}_\e\leq 2|\mathcal{Y}|\e^{-2n\epsilon^2}+2|\mathcal{Y}|\e^{-2nt^{1/3}\epsilon^2},
	\end{align}
	where 
	\begin{align}\label{eq_e41}
		\epsilon = \min_{\substack{i,j\\i\neq j}}\dfrac{\big\lVert P_{\hat{y}^{nt}_i}-\bar{\mathrm{P}}_j\big\rVert_r}{(2+t^{-1/3})|\mathcal{Y}|^{1/r}},
	\end{align}
	which is the optimal value of $\epsilon$ that exhibits the tightest upper bound for the error probability $\mathbb{P}_\e$ given by \eqref{eq_e40}. This completes the proof of Theorem \ref{th2}.
\end{proof}

The following corollary provides a simplified upper bound on the error probability when $t\to\infty$.

\begin{cor}
	When the number of training vectors per label goes to infinity, i.e., when $t\to\infty$, which is equivalently to the case when the probability distribution $p(y^n|x)$ is known at the classifier, the error probability of the proposed classifier is upper bounded as
	\begin{align}\label{eq_e42}
		\mathbb{P}_\e\leq 2|\mathcal{Y}|\e^{-2n\epsilon^2},
	\end{align}
	where $\epsilon$ is given by
	\begin{align}\label{eq_e43}
		\epsilon = \min_{\substack{i,j\\i\neq j}}\dfrac{\big\lVert \bar{\mathrm{P}}_i-\bar{\mathrm{P}}_j\big\rVert_r}{2|\mathcal{Y}|^{1/r}}.
	\end{align}
\end{cor}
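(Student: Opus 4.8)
The plan is to derive the corollary as the limiting case $t\to\infty$ of the bound \eqref{eq_e40}--\eqref{eq_e41} already established in Theorem \ref{th2}. Since that bound holds for every finite $t\geq 1$, it is enough to track the behaviour of its three constituent pieces as $t$ grows without bound: the first exponential term, the second exponential term, and the quantity $\epsilon$ defined in \eqref{eq_e41}. Two of these pieces behave trivially and one requires a genuine convergence argument.

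First I would dispose of the easy pieces. In \eqref{eq_e40} the second summand $2|\mathcal{Y}|\e^{-2nt^{1/3}\epsilon^2}$ tends to $0$ as $t\to\infty$, since the factor $t^{1/3}$ in the exponent diverges (assuming the non-degenerate case $\epsilon>0$); this leaves only the first summand $2|\mathcal{Y}|\e^{-2n\epsilon^2}$, matching the form claimed in \eqref{eq_e42}. Simultaneously, in \eqref{eq_e41} the prefactor $(2+t^{-1/3})$ converges to $2$, so the denominator tends to $2|\mathcal{Y}|^{1/r}$, matching the denominator in \eqref{eq_e43}.

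The substantive step is to show that the empirical distribution $P_{\hat{y}^{nt}_i}$ of the concatenated training data converges to the averaged marginal $\bar{\mathrm{P}}_i$, so that the numerator $\big\lVert P_{\hat{y}^{nt}_i}-\bar{\mathrm{P}}_j\big\rVert_r$ converges to $\big\lVert \bar{\mathrm{P}}_i-\bar{\mathrm{P}}_j\big\rVert_r$. For each symbol $y_k$, the coordinate $\mathcal{I}[\hat{y}^{nt}_i=y_k]/(nt)$ is an average of $nt$ independent, non-identical Bernoulli indicators whose success probabilities average to exactly $\bar{\mathrm{p}}(y_k|x_i)$, by the block decomposition already recorded in \eqref{eq_e37}. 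Because these indicators have uniformly bounded variance, Kolmogorov's strong law of large numbers gives $\mathcal{I}[\hat{y}^{nt}_i=y_k]/(nt)\to\bar{\mathrm{p}}(y_k|x_i)$ almost surely as $t\to\infty$, hence $P_{\hat{y}^{nt}_i}\to\bar{\mathrm{P}}_i$ coordinatewise, and continuity of the Minkowski norm then yields the desired convergence. Combining the three limits, $\epsilon$ converges to $\min_{i\neq j}\big\lVert \bar{\mathrm{P}}_i-\bar{\mathrm{P}}_j\big\rVert_r/(2|\mathcal{Y}|^{1/r})$, which is \eqref{eq_e43}, and the bound \eqref{eq_e40} collapses to \eqref{eq_e42}. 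The main obstacle I anticipate is precisely this convergence $P_{\hat{y}^{nt}_i}\to\bar{\mathrm{P}}_i$: it must be argued for independent but non-identically distributed summands rather than by invoking the i.i.d. law of large numbers, and it is exactly this almost-sure identification of the empirical training distribution with $\bar{\mathrm{P}}_i$ that justifies the corollary's reading of $t\to\infty$ as the regime in which the relevant statistics of $p(y^n|x)$ are effectively known at the classifier.
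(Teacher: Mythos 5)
Your proposal is correct and takes the only natural route, which is exactly what the paper intends by its one-line ``the proof is straightforward'': letting $t\to\infty$ in the bound \eqref{eq_e40}--\eqref{eq_e41} of Theorem~\ref{th2}, so that the factor $(2+t^{-1/3})\to 2$, the second exponential term vanishes, and $P_{\hat{y}^{nt}_i}\to\bar{\mathrm{P}}_i$ identifies the numerator. In fact you supply more rigor than the paper does, since the coordinatewise almost-sure convergence $\mathcal{I}[\hat{y}^{nt}_i=y_k]/(nt)\to\bar{\mathrm{p}}(y_k|x_i)$ for independent but non-identically distributed indicators (via Kolmogorov's strong law with bounded variances, using the block structure \eqref{eq_e37}) is precisely the step the paper leaves unstated.
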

\begin{proof}
The proof is straightforward.
\end{proof}

As can be seen from \eqref{eq_e6} and \eqref{eq_e9}, the performance of the proposed classifier depends on   $r$. We cannot derive the optimal value of $r$ that minimizes the error probability since we do not have the exact expression of the error probability, we only have its upper bound. On the other hand, in practice,   the optimal $r$ with respect to the upper bound on the error probability also cannot be derived since the  upper bound depends on $\bar{\mathrm{P}}_j$, which would be unknown in practice due to $p_{Y^n|X}(y^n|x)$ being unknown.  As a result, for our numerical examples, we consider the Euclidean distance $(r=2)$, which is one of the most widely used distance metric in practice.  
 
The following corollary establishes the asymptotic optimality of the proposed classifier with respect to $n$.

\begin{cor}
The proposed classifier has an error probability that satisfies $\mathbb{P}_\e\to0$ as $n\to\infty$ if $|\mathcal{Y}|\leq\mathcal{O}(n^m)$, $m$ is fixed, and $r>2m$, where $m$ is an arbitrary fixed integer. Here, $n^m$ indicates the dimension of our space, i.e., maximum number of alphabets each element in the data vector $y^n$ can take. Thereby, the proposed classifier is asymptotically optimal .
\end{cor}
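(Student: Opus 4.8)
The plan is to start directly from the error-probability bound \eqref{eq_e9} established in Theorem \ref{th2} and show that, under the stated scaling $|\mathcal{Y}|\leq\mathcal{O}(n^m)$ with $r>2m$, both exponential terms vanish as $n\to\infty$. The first simplification is to note that since $t\geq 1$ we have $t^{1/3}\geq 1$, so the second term in \eqref{eq_e9} is no larger than the first, yielding the cleaner bound $\mathbb{P}_\e\leq 4|\mathcal{Y}|\e^{-2n\epsilon^2}$. It therefore suffices to prove that $2n\epsilon^2-\ln\big(4|\mathcal{Y}|\big)\to\infty$, because then $4|\mathcal{Y}|\e^{-2n\epsilon^2}=4\exp\big(\ln|\mathcal{Y}|-2n\epsilon^2\big)\to 0$.

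Next I would substitute the expression for $\epsilon$ from \eqref{eq_e10}. Writing $d_{\min}=\min_{i\neq j}\big\lVert P_{\hat{y}^{nt}_i}-\bar{\mathrm{P}}_j\big\rVert_r$, the exponent becomes
\begin{align}
2n\epsilon^2=\frac{2n\,d_{\min}^2}{(2+t^{-1/3})^2\,|\mathcal{Y}|^{2/r}}\geq \frac{2\,d_{\min}^2}{9}\cdot\frac{n}{|\mathcal{Y}|^{2/r}},
\end{align}
where the inequality uses $t\geq 1\Rightarrow 2+t^{-1/3}\leq 3$. The crucial factor is $n/|\mathcal{Y}|^{2/r}$. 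Using $|\mathcal{Y}|\leq C n^m$ for some constant $C$ gives $|\mathcal{Y}|^{2/r}\leq C^{2/r}n^{2m/r}$, hence $n/|\mathcal{Y}|^{2/r}\geq C^{-2/r}n^{1-2m/r}$. Since $r>2m$ forces the exponent $1-2m/r>0$, this factor grows polynomially in $n$ and tends to infinity. In contrast $\ln\big(4|\mathcal{Y}|\big)=\mathcal{O}(\ln n)$ grows only logarithmically and is dominated by any positive power of $n$, so the difference $2n\epsilon^2-\ln(4|\mathcal{Y}|)\to\infty$, which closes the argument.

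The step I expect to be the main obstacle is controlling $d_{\min}$, which is random through the empirical distributions $P_{\hat{y}^{nt}_i}$ and which moreover involves the $n$-dependent averaged distributions $\bar{\mathrm{P}}_j$ defined in \eqref{eq_e8}. I would argue that as $n$ grows $P_{\hat{y}^{nt}_i}$ concentrates around $\bar{\mathrm{P}}_i$, so that $d_{\min}$ stays bounded below by a positive constant $c$ provided the labels remain statistically distinguishable, i.e. $\min_{i\neq j}\big\lVert\bar{\mathrm{P}}_i-\bar{\mathrm{P}}_j\big\rVert_r$ is bounded away from zero. Under this separation assumption $d_{\min}^2\geq c^2>0$ and the chain above goes through unchanged. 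The delicate point is that the growth of the alphabet $|\mathcal{Y}|$ and the growth of the block length $n$ influence $\bar{\mathrm{P}}_j$ simultaneously; the condition $r>2m$ is precisely what guarantees that the $|\mathcal{Y}|^{2/r}$ penalty in the denominator of $\epsilon$ is overwhelmed by the linear factor $n$ in the exponent, and this tension is the heart of the corollary.
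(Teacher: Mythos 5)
Your asymptotic skeleton is correct, and you have correctly located the difficulty: the $\epsilon$ in \eqref{eq_e10} is random, because $P_{\hat{y}^{nt}_i}$ is built from the training data. But that difficulty is not resolved in your proposal --- it is exactly the step you defer with ``I would argue that\dots'', and it is where the entire content of the paper's proof lives. Your claim that under a separation assumption ``$d_{\min}^2\geq c^2>0$ and the chain above goes through unchanged'' has a genuine gap, for two reasons. First, $d_{\min}\geq c$ can hold only with high probability, not surely, so the probability of the bad event $\{d_{\min}<c\}$ must be added to the error bound and itself shown to vanish (this combination also needs care, since the bound \eqref{eq_e9} already integrates over the training-data randomness, so conditioning it on a training-data event is not automatic). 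Second, and more importantly, proving $\pr\{d_{\min}<c\}\to 0$ is not a consequence of separation alone: by the Minkowski inequality, $d_{\min}\geq \min_{i\neq j}\big\lVert\bar{\mathrm{P}}_i-\bar{\mathrm{P}}_j\big\rVert_r-\max_i\big\lVert P_{\hat{y}^{nt}_i}-\bar{\mathrm{P}}_i\big\rVert_r$, and controlling $\big\lVert P_{\hat{y}^{nt}_i}-\bar{\mathrm{P}}_i\big\rVert_r$ over an alphabet of size $|\mathcal{Y}|=\mathcal{O}(n^m)$ requires per-coordinate deviations of order $|\mathcal{Y}|^{-1/r}$; a Hoeffding-type bound (through the paper's Theorem \ref{th4} comparison, since the training elements are i.non-i.d.) plus a union bound over the $|\mathcal{X}||\mathcal{Y}|$ coordinates then yields a failure probability of order $|\mathcal{X}|\,n^m\e^{-\kappa t\, n^{1-2m/r}}$ for a constant $\kappa>0$, which tends to zero only because $r>2m$. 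So the condition $r>2m$ is needed twice --- once in your main chain and once in the $d_{\min}$ control --- and the second use is the one your sketch omits; with $t$ fixed (possibly $t=1$), the concentration of the training empirical distributions must come from $n$ itself, over an alphabet that also grows with $n$.

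This omitted step is precisely what the paper's proof (Appendix \ref{A1}) supplies, and it explains why the paper does not argue from \eqref{eq_e9}--\eqref{eq_e10} directly: it re-runs the argument of Theorem \ref{th2} with concentration sets $\mathcal{B}^{\epsilon}_{k,l}$ in \eqref{eq_e51} imposed on the training vectors of \emph{all} labels, uses \eqref{eq_e60} to replace the random distance $\big\lVert P_{\hat{y}^{nt}_i}-\bar{\mathrm{P}}_1\big\rVert_r$ by the deterministic one $\big\lVert\bar{\mathrm{P}}_i-\bar{\mathrm{P}}_1\big\rVert_r$, and arrives at the bound \eqref{eq_e77} with the deterministic $\epsilon$ of \eqref{eq_e78}, at the price of an extra factor $|\mathcal{X}|$ from the enlarged union bound; the computation in \eqref{eq_e79} is then the same one you perform. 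If you fold your concentration sketch into a rigorous lemma of the above form, your conditioning route does work and is essentially a reorganization of the same machinery, but as written the proof is incomplete at its crucial point. Note, finally, that both your argument and the paper's implicitly require $\min_{i\neq j}\big\lVert\bar{\mathrm{P}}_i-\bar{\mathrm{P}}_j\big\rVert_r$ to remain bounded away from zero as $n$ grows; you state this hypothesis explicitly, which the corollary itself does not.
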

\begin{proof}
For the proof, please see Appendix \ref{A1}.
\end{proof}

\section{Simulation Results}\label{sec4}
In this section, we provide simulation results of the performance of the proposed classifier for $r=2$ and compare it to benchmark schemes. The benchmark schemes that we adopt for comparison are the naive Bayes classifier and the KNN algorithm. 
We cannot adopt a classifier based on a neural network since neural networks require a very large training set, which we assume it is not available. For the naive Bayes classifier, the probability distribution $p_{_{Y^n|X}}(y^n|x)$ is estimated from the training vectors as follows. Let again $\hat{y}^{nt}_i$ be a vector obtained by concatenating all training feature vectors for the input label $x_i$ as in \eqref{eq_e3}. Then, the estimated probability distribution of $p(y_j=y|x_i)$, denoted by $\hat{p}(y_j=y|x_i)$, is found as
\begin{align}\label{eq_e44}
\hat{p}(y_j=y|x_i)=\dfrac{\mathcal{I}\big[\hat{y}^{nt}_i=y\big]}{nt},
\end{align}
and the naive Bayes classifier decides according to
\begin{align}\label{eq_e45}
\hat{x} = \arg\max_{x_i}\prod_{k=1}^n\hat{p}(y_k|x_i).
\end{align}
The main problem of the naive Bayes classifier occurs when an alphabet $y_j\in\mathcal{Y}$ is not present in the training feature vectors. In that case, $\hat{p}(y_j|x_i)$   in \eqref{eq_e44}  is $\hat{p}(y_j|x_i)=0$, $\forall x_i\in\mathcal{X}$, and as a result  the right hand side of \eqref{eq_e45} is zero since at least one of the elements in the product in \eqref{eq_e45} is zero. In this case, the naive Bayes classifier fails to provide an accurate classification of the labels. In what follows, we see that this issue of the naive Bayes classifier appears frequently when we have a small number of training feature vectors. On the other hand, the KNN classifier works  as follows. For the observed feature vector $y^n$, the KNN classifier looks for the $k$ nearest feature vectors to $y^n$, among all training feature vectors $\hat{y}_{r_s}^n$, for all $1\leq r\leq |\mathcal{X}|$ and $1\leq s\leq T$. Then by considering a set of $K$ input-output pairs $(x_k,\hat{y}^n_{k_l})$, for $k\in\{1,2,\ldots,|\mathcal{X}|\}$ and $l\in\{1,2,\ldots,|T|\}$, the KNN classifier decides a label which is most frequent among  $x_k$-s. The optimum value of $k$ for $t=1$ is $k=1$.

In the following, we provide numerical examples where we illustrate the performance of the proposed classifier when $p_{_{Y^n|X}}(y^n|x)$ is artificially generated.  

\subsection{The I.I.D. Case With One Training Sample Per Label}
In the following examples, we assume that the classifiers have access to only one training feature vector for each label, the elements of the feature vectors are generated i.i.d., and the   alphabet  size of the feature vector, $|\mathcal{Y}|$, is fixed.

\begin{figure} 
	\centering			\includegraphics[width=1\linewidth]{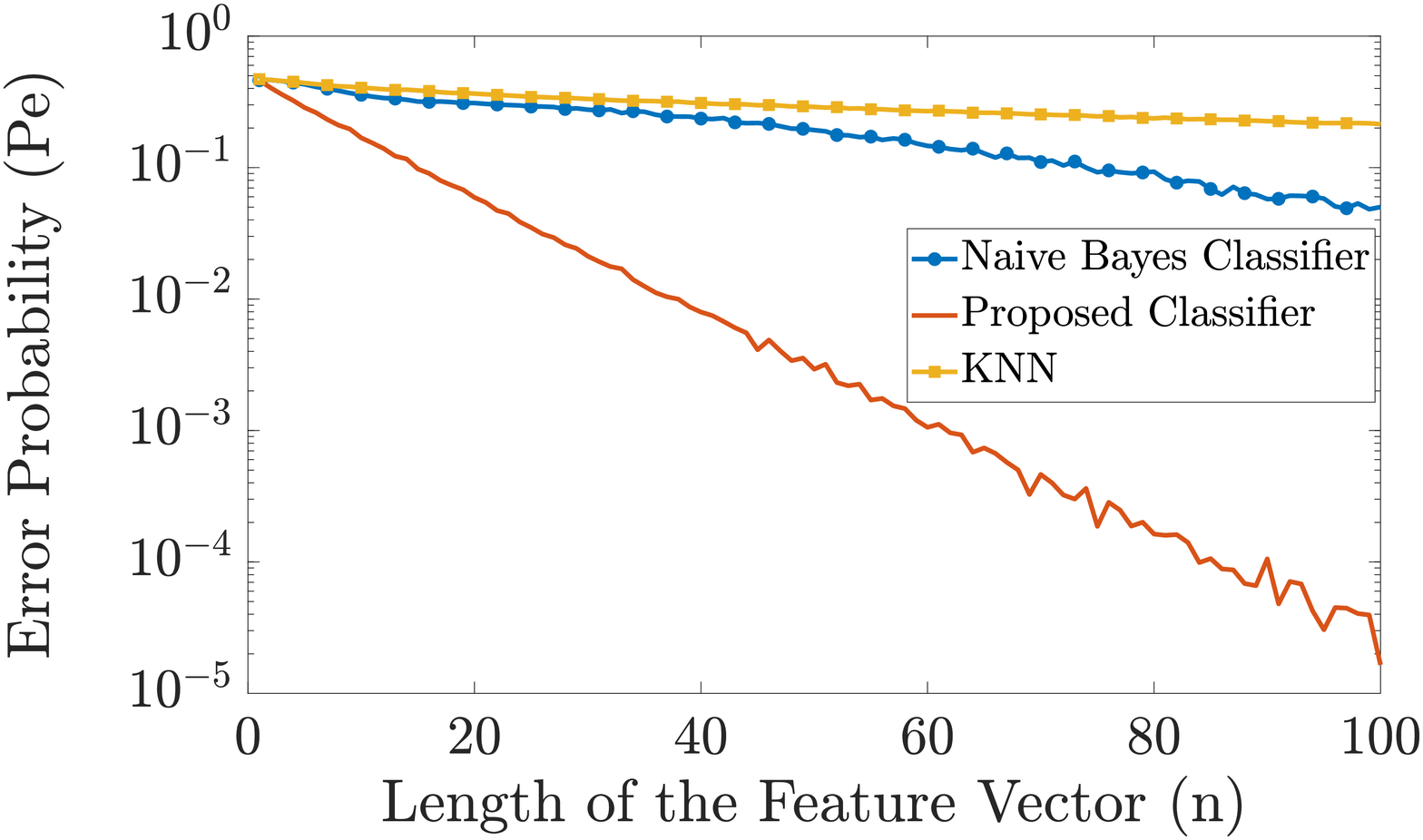}
	\caption{Comparison in error probability between the naive Bayes classifier, KNN, and the proposed classifier.} \label{fig:fig5}
\end{figure}

\begin{figure} 
	\centering\includegraphics[width=1\linewidth]{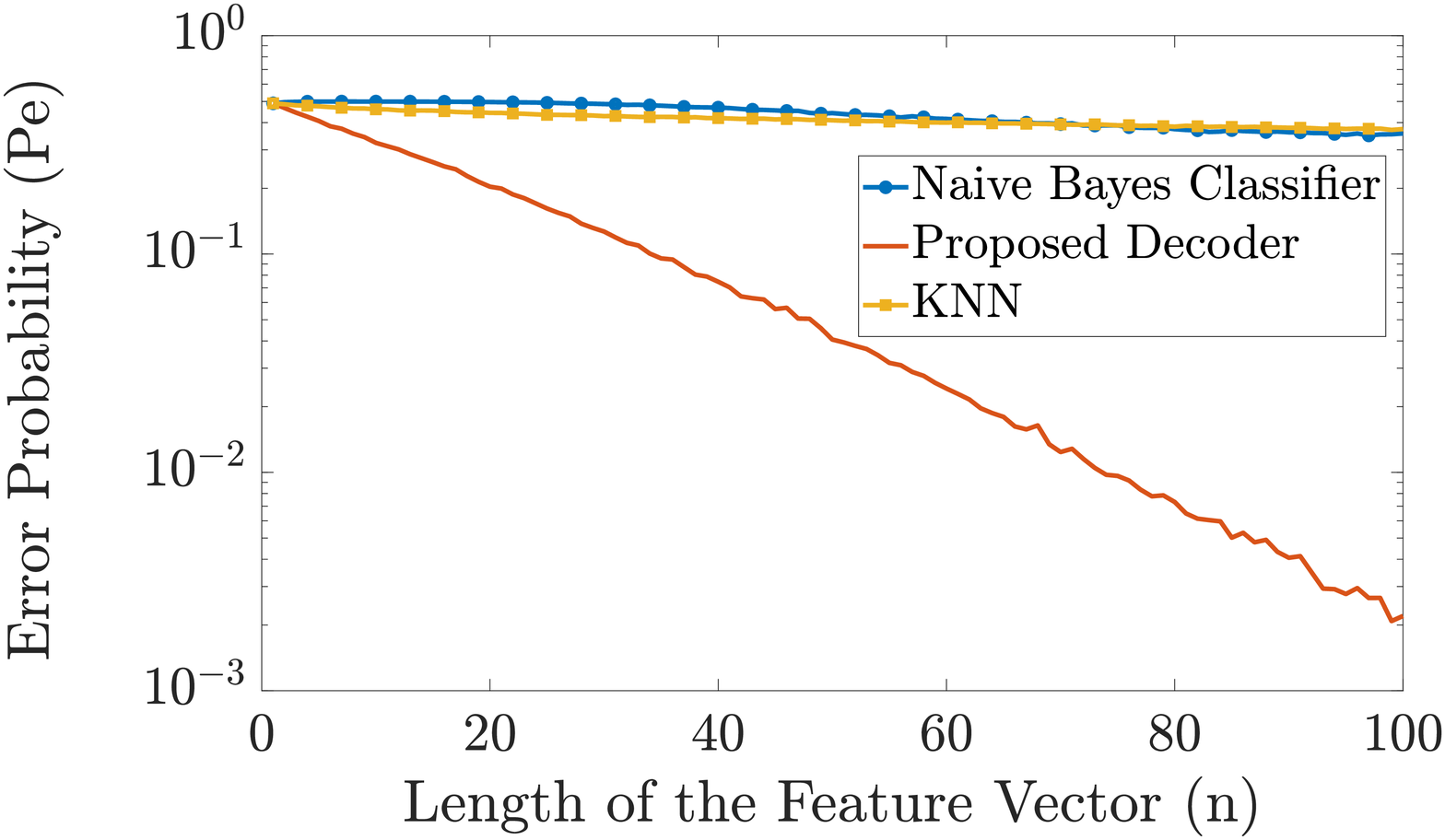}
	\caption{Comparison in error probability between the naive Bayes classifier, KNN, and the proposed classifier.} \label{fig:fig1}
\end{figure}

\begin{figure}[!h]
	\centering			\includegraphics[width=1\linewidth]{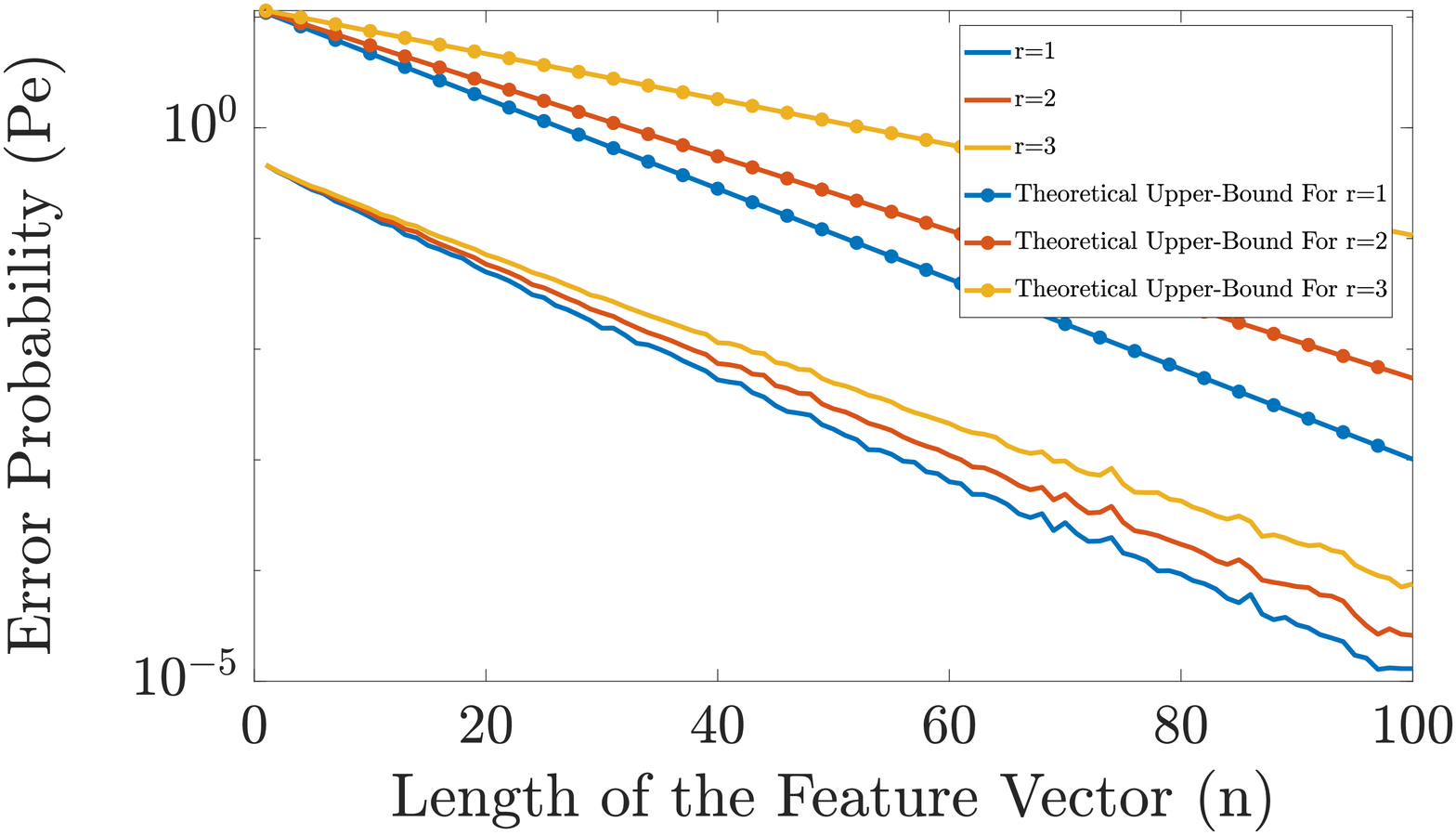}
	\caption{Comparison in error probability of the proposed classifier for different values of $r$ when $|\mathcal{Y}|=6$. The related theoretical upper bounds for each value of $r$ are also given.} \label{fig:fig03}
\end{figure}

In Figs. \ref{fig:fig5} and  \ref{fig:fig1}, we compare the error probability of the proposed classifier with the naive Bayes classifier and the KNN algorithm for the case when $|\mathcal{Y}|=6$ and $|\mathcal{Y}|=20$, respectively. In both examples, we have two different labels, i.e., $|\mathcal{X}|=2$. As a result, we have two different probability distributions $p_{_{Y^n|X_1}}(y^n|x_1)$ and $p_{_{Y^n|X_2}}(y^n|x_2)$. The probability distributions $p_{_{Y^n|X_1}}(y^n|x_1)$ and $p_{_{Y^n|X_2}}(y^n|x_2)$ are randomly generated as follows. We first generate two random vectors of length 6 and length 20 for Figs. \ref{fig:fig5} and  \ref{fig:fig1}, respectively, where the elements of these vectors are drawn independently from a uniform probability distribution. Then we normalize these vectors such that the sum of their elements is equal to one. These two normalized randomly generated vectors then represent the two probability distributions $p_{_{Y_i|X_1}}(y_i|x_1)=p_{_{Y|X_1}}(y|x_1)$ and $p_{_{Y_i|X_2}}(y_i|x_2)=p_{_{Y |X_2}}(y |x_2)$, $\forall i$. Then, $p_{_{Y^n|X_k}}(y^n|x_k)$ is obtained as $p_{_{Y^n|X_k}}(y^n|x_k)= \prod_{i=1}^n p_{_{Y_i|X_k}} (y_i|x_k)$, for $k=1,2$. The simulation is carried out as follows. For each $n$, we generate one training vector for each label, using the aforementioned probability distributions. Then, as test samples, we generate $1000$ feature vectors for each label and pass these feature vectors through our proposed classifier, the naive Bayes classifier, and the KNN algorithm, and compute the errors. The length of the feature vector $n$ is varied from $n=1$ to $n=100$. We repeat the simulation $5000$ times  and then plot the error probability.
Figs. \ref{fig:fig5} and  \ref{fig:fig1} show that   the proposed classifier outperforms both the naive Bayes classification and KNN. The main reason for this performance gain is  because when only one training vector per label is available, the proposed classifier is more resilient to errors than the naive Bayes classifier, whereas the KNN algorithm has very poor performance because of the ``curse of dimensionality''. Specifically, the naive Bayes classifier cannot perform an accurate classification for small $n$ compared to $|\mathcal Y|$ since the chance that  an alphabet will not be present in one of the training feature vectors is close to $1$. On the other hand, the KNN algorithm cannot perform an accurate classification for large $n$ since the dimension of the input feature vector becomes much larger than the training data and the ``curse of dimensionality'' occurs.

In Fig.~\ref{fig:fig03}, we compare the performance of the proposed classifier for different values of $r$ when $|\mathcal{Y}|=6$ with the derived upper bounds.
 As can be seen, for this example, the derived theoretical  upper bounds have similar slope as the exact error probabilities. Moreover, we can  see that for this example, 
  the optimal $r$   is $r=1$. However, this is not always the case and  it depends on $p_{_{Y^n|X_k}}(y^n|x_k)$, $|\mathcal{Y}|$, and $|\mathcal{X}|$.

%



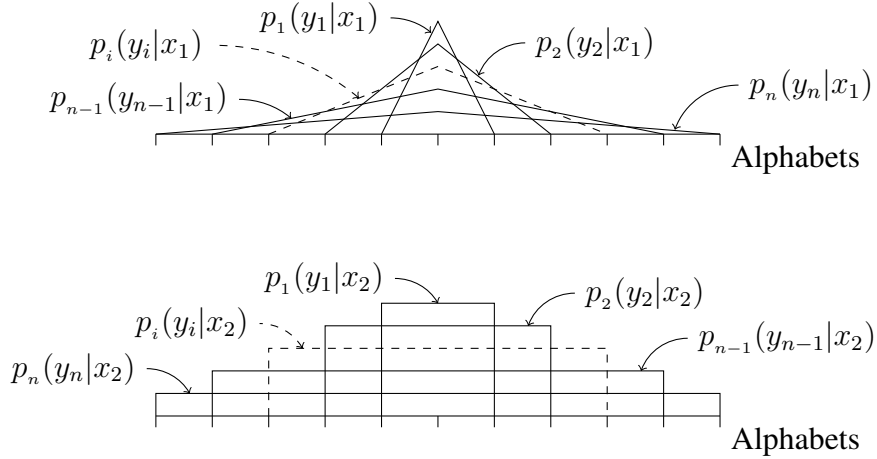
\begin{figure}[!h]
	\centering
	\begin{tikzpicture}[scale=3]
	\draw (-1.25,0) -- (1.25,0) coordinate (x axis) node[below right] {Alphabets};
	\foreach \x in {-1.25,-1,...,1.25}
	\draw (\x,0) -- (\x,-0.05);
	\draw (-1.25,0) -- (0,0.1) -- (1.25,0);
	\draw[<-] (1.05,0.02) arc (160:90:0.3) node[right] {$p_{_n}(y_n|x_1)$};
	\draw (-1,0) -- (0,0.2) -- (1,0);
	\draw[<-] (-0.65,0.07) arc (50:90:0.36) node[left] {$p_{_{n-1}}(y_{n-1}|x_1)$};
	\draw [dashed] (-0.75,0) -- (0,0.3) -- (0.75,0);
	\draw[<-, dashed] (-0.35,0.16) arc (50:90:1) node[left]
	{$p_{_i}(y_i|x_1)$};
	\draw (-0.5,0) -- (0,0.4) -- (0.5,0);
	\draw[<-] (0.18,0.26) arc (160:90:0.22) node[right] {$p_{_2}(y_2|x_1)$};
	\draw (-0.25,0) -- (0,0.5) -- (0.25,0);
	\draw[<-] (-0.03,0.45) arc (50:90:0.28) node[left] {$p_{_1}(y_1|x_1)$};
	
	\draw (-1.25,-1.25) -- (1.25,-1.25) node[below right]
	{Alphabets};
	\foreach \x in {-1.25,-1,...,1.25}
	\draw (\x,-1.25) -- (\x,-1.3);
	\draw (-1.25,-1.25) -- (-1.25,-1.15) -- (1.25,-1.15) -- (1.25,-1.25);
	\draw (-1,-1.25) -- (-1,-1.05) -- (1,-1.05) -- (1,-1.25);
	\draw [dashed] (-0.75,-1.25) -- (-0.75,-0.95) -- (0.75,-0.95) -- (0.75,-1.25);
	\draw (-0.5,-1.25) -- (-0.5,-0.85) -- (0.5,-0.85) -- (0.5,-1.25);
	\draw (-0.25,-1.25) -- (-0.25,-0.75) -- (0.25,-0.75) -- (0.25,-1.25);
	\draw[<-] (0,-0.75) arc (30:90:0.22) node[left] {$p_{_1}(y_1|x_2)$};
	\draw[<-] (0.4,-0.85) arc (160:90:0.22) node[right] {$p_{_2}(y_2|x_2)$};
	\draw[<-, dashed] (-0.6,-0.95) arc (30:90:0.22) node[left]
	{$p_{_i}(y_i|x_2)$};
	\draw[<-] (0.9,-1.05) arc (160:90:0.22) node[right] {$p_{_{n-1}}(y_{n-1}|x_2)$};
	\draw[<-] (-1.1,-1.15) arc (30:90:0.22) node[left]
	{$p_{_n}(y_n|x_2)$};
	\end{tikzpicture}
	\caption[hi]
	{Illustration of the probability distributions $p_{_i}(y_i|x_1)$ (upper figure) and $p_{_i}(y_i|x_2)$ (lower figure), for $i=1,2,\ldots,n$.}
	\label{fig:4.2}
\end{figure}

\subsection{The Overlapping I.Non-I.D. Case With One Training Sample Per Label}\label{koscher}
In this example, we consider the i.non-i.d. case where the probability distributions $p_{_i}(y_i|x_k)$ are overlapping for all $i$, as shown in Fig. \ref{fig:4.2}. The small orthogonal lines on the x-axis in Fig. \ref{fig:4.2} represent alphabets, i.e., the elements in $\mathcal Y$, and the probability of occurrence of an alphabet $y_i$ is equal to the intersection between the corresponding orthogonal line to the represented probability distribution $p_{_i}(y_i|x_k)$ for $k=1,2$. By ``overlapping'', we mean the following. Let $\mathcal{Y}_v$ and $\mathcal{Y}_u$ denote the set of outputs generated by $p_{_v}(y_v|x_k)$ and $p_{_u}(y_u|x_k)$, respectively. If for any $v$ and $u$, $\mathcal{Y}_v\cap\mathcal{Y}_u\neq\emptyset
$ holds, we say that the output alphabets are overlapping. 

To demonstrate the performance of our proposed classifier in the overlapping case, we assume that we have two different labels, $\mathcal{X}=\{x_1,x_2\}$, where the corresponding conditional probability distributions $p_{_i}(y_i|x_1)$ and $p_{_i}(y_i|x_2)$ are obtained as follows. For a given $n$, let $\mathcal{Y}=\big\{-n,-n+1,\ldots,0,\ldots,n-1,n\big\}$ be the set of all alphabets. Note  that the size of $\mathcal Y$ grows with $n$. Also, let $\mathbf{u}_i$ and $\mathbf{v}_i$ $(1\leq i\leq n)$ be vectors of length $2n+1$, given by
\begin{align}\label{eq_e46,eq_e47}
\mathbf{u}_i &= \bigg[0,\ldots,0,\dfrac{1}{i(i+1)},\dfrac{2}{i(i+1)},\ldots,\dfrac{i}{i(i+1)},\dfrac{i+1}{i(i+1)},\dfrac{i}{i(i+1)},\ldots,\dfrac{1}{i(i+1)},0,\ldots,0\bigg],\\
\mathbf{v}_i &= \bigg[0,\ldots,0,\dfrac{1}{i(i+1)},\dfrac{1}{i(i+1)},\ldots,\dfrac{1}{i(i+1)},\dfrac{1}{i(i+1)},0,\ldots,0\bigg].
\end{align}
The number of zeros in each side of the vectors $\mathbf{u}_i$ and $\mathbf{v}_i$ is $(n-i)$. To generate a feature vector from label $x_1(x_2)$, we generate the vector $y^n=(y_1,y_2,\ldots,y_n)$, where $y_k$ takes values from the set $\mathcal{Y}$, with a probability distribution $p_{_i}(y_i|x_1)=\mathbf{u}_i\big(1+2(n+y_i)\big)$ $\Big(p_{_i}(y_i|x_2)=\mathbf{v}_i\big(1+2(n+y_i)\big)\Big)$.

The simulation is carried out as follows. For each $n$, we generate one training feature vector for each label. Then, we generate $1000$ feature vectors for each label and pass them through our proposed classifier, the naive Bayes classifier, and the KNN algorithm  and calculate the error probability. We change the length of the feature vector from $n=1$ to $n=100$ and repeat the simulation $1000$ times and then plot the error probability.
\begin{figure}[!h]
	\centering			\includegraphics[width=1\linewidth]{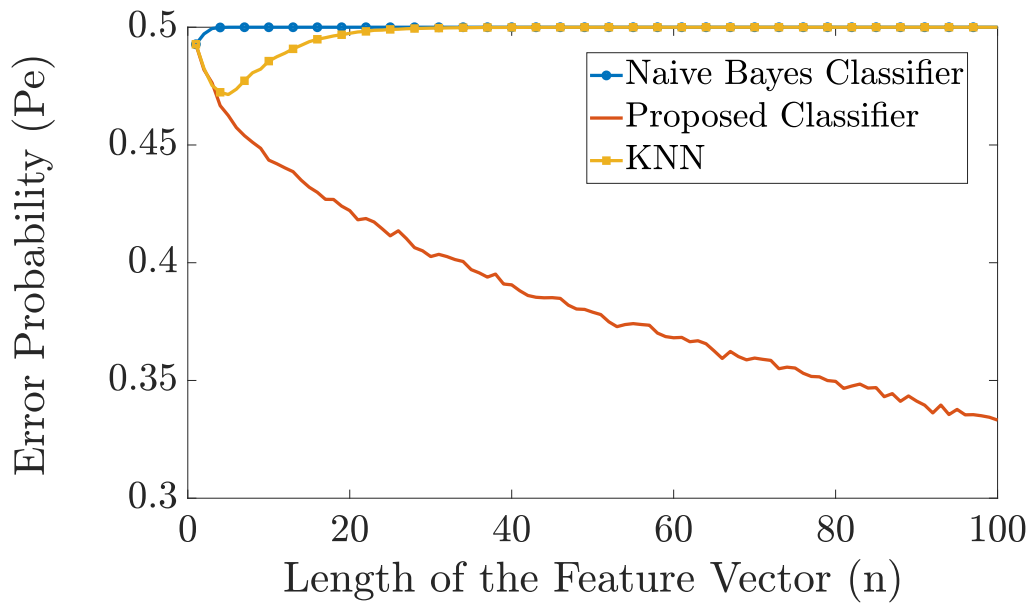}
	\caption{Comparison in error probability between the naive Bayes classifier, KNN, and the proposed classifier $(T=1)$.} \label{fig:fig2}
\end{figure}
As shown in Fig. \ref{fig:fig2}, there is a huge difference between the performance of the two benchmark classifiers and the proposed classifier. The error probability of the naive Bayes classifier is almost $0.5$ for all shown values of $n$ as it is susceptible to the problem of unseen alphabets in the training vectors. The error probability of the KNN classifier is also almost $0.5$ for   $n>20$ as it is susceptible to the ``curse of dimensionality''.
 However, the error probability of our proposed classifier continuously decays as $n$ increases.
 
In Fig.~\ref{fig:fig3}, we run the same experiments as in  Fig.~\ref{fig:fig2} but with $T=100$, i.e., 100 training feature vectors per label.   As   can bee seen from Fig.~\ref{fig:fig3}, the performance of the proposed classifier is better than the naive Bayes classifier, for $n>15$. Since $|\mathcal{Y}|=2n+1$, for small values of $n$, the naive Bayes classifier has access to many training samples, and thereby, it performs very close to the case when the probability distribution $p_{_{Y^n|X}}(y^n|x)$ is known, i.e., to the maximum-likelihood classifier, and hence it has the optimal performance. As $n$ increases, the number of alphabets rises, i.e., $|\mathcal{Y}|$ rises, and due to the aforementioned issue of the naive Bayes classifier with unseen alphabets, our proposed classifier makes much better classification than the naive Bayes classifier. Also, note that the error probability of our proposed classifier decays exponentially as $n$ increases which is not the case with the naive Bayes classifier. Moreover, Fig. \ref{fig:fig3} also shows the   theoretical upper bound on the error probability we derived in \eqref{eq_e9}.
\begin{figure}[!h]
	\centering			\includegraphics[width=1\linewidth]{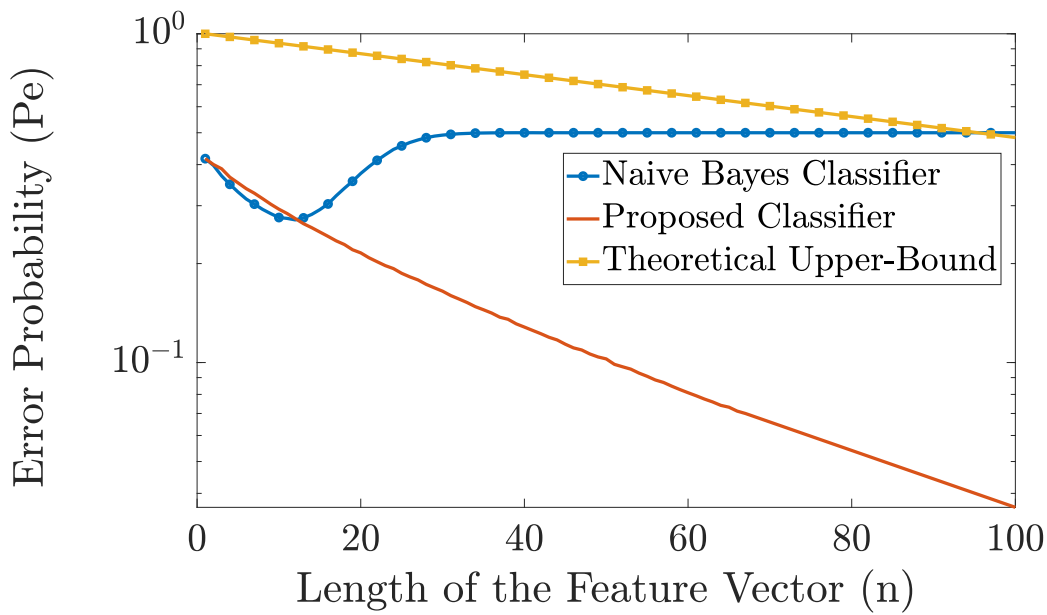}
	\caption{Comparison in error probability between the naive Bayes classifier and the proposed classifier $(T=100)$.} \label{fig:fig3}
\end{figure}
\subsection{The Non-Overlapping I.Non-I.D. Case With One Training Sample For Each Label}
In this example, we consider the i.non-i.d. case where the probability distributions $p_{_j}(y_j|x_i)$ are non-overlapping for all $j$ as shown in Fig. \ref{fig:3.2}, where we defined "overlapping" in Subsection \ref{koscher}. Hence, we test the other extreme in terms of possible distribution of the elements in the feature vectors $Y^n$.

To demonstrate the performance of our proposed classifier in the non-overlapping case, we assume that we have two different labels $\mathcal{X}=\{x_1, x_2\}$, the corresponding conditional probability distributions $p_{_i}(y_i|x_1)$ and $p_{_i}(y_i|x_2)$ are obtained as follows. For a given $n$, let $\mathcal{Y}=\big\{1,2,3,\ldots,(n+1)^2-1\big\}$ be the set of all alphabets of the element in the feature vectors. Note again that the size of $\mathcal Y$ grows with $n$. Also, let $\mathbf{u}_i$ and $\mathbf{v}_i$ for $(1\leq i\leq n)$, be vectors of length $(n+1)^2-1$, given by
\begin{align}\label{eq_e48,eq_e49}
\mathbf{u}_i &= \bigg[0,\ldots,0,\dfrac{1}{i(i+1)},\dfrac{2}{i(i+1)},\ldots,\dfrac{i}{i(i+1)},\dfrac{i+1}{i(i+1)},\dfrac{i}{i(i+1)},\ldots,\dfrac{1}{i(i+1)},0,\ldots,0\bigg],\\
\mathbf{v}_i &= \bigg[0,\ldots,0,\dfrac{1}{i(i+1)},\dfrac{1}{i(i+1)},\ldots,\dfrac{1}{i(i+1)},\dfrac{1}{i(i+1)},0,\ldots,0\bigg].
\end{align}
The number of zeros in the left-hand sides of $\mathbf{u}_i$ and $\mathbf{v}_i$ is $i^2-1$. To generate a feature vector from the label $x_1(x_2)$, we generate the vector $y^n=(y_1,y_2,\ldots,y_n)$, where $y_k$ take values from the set $\mathcal{Y}$, with probability distribution $p_{_i}(y_i|x_1)=\mathbf{u}_i(y_i)$ $\big(p_{_i}(y_i|x_2)=\mathbf{v}_i(y_i)\big)$. 
\begin{figure}[!h]
	\centering
	\begin{tikzpicture}[scale=3]
	\draw (-2.5,0) -- (1.5,0) coordinate (x axis) node[below right] {Alphabets};
	\path (-0.5,-0.05) node(x) {$\ldots$};
	\foreach \x in {-2.5,-2.25,...,-0.75}
	\draw (\x,0) -- (\x,-0.05);
	\foreach \x in {-0.25,0}
	\draw (\x,0) -- (\x,-0.05);
	\draw (1.5,0) -- (1.5,-0.05);
	\draw (-2.5,0) -- (-2.25,0.5) -- (-2,0);
	\draw[<-] (-2.4,0.2) arc (30:90:0.22) node[left] {$p_{_1}(y_1|x_1)$};
	\draw (-1.75,0) -- (-1.25,0.3) -- (-0.75,0);
	\draw[<-] (-1.25,0.3) arc (30:90:0.22) node[left] {$p_{_2}(y_2|x_1)$};
	\path (0.625,-0.05) node(x) {$\ldots$};
	\draw (-0.25,0) -- (0.625,0.15) -- (1.5,0);
	\draw[<-] (0.625,0.15) arc (30:90:0.22) node[left] {$p_{_n}(y_n|x_1)$};
	
	\draw (-2.5,-1.25) -- (1.5,-1.25) coordinate (x axis) node[below right] {Alphabets};
	\path (-0.5,-1.3) node(x) {$\ldots$};
	\foreach \x in {-2.5,-2.25,...,-0.75}
	\draw (\x,-1.25) -- (\x,-1.3);
	\foreach \x in {-0.25,0}
	\draw (\x,-1.25) -- (\x,-1.3);
	\draw (1.5,-1.25) -- (1.5,-1.3);
	\draw (-2.5,-1.25) -- (-2.5,-0.75) -- (-2,-0.75) -- (-2,-1.25);
	\draw[<-] (-2.5,-1.05) arc (30:90:0.22) node[left] {$p_{_1}(y_1|x_2)$};
	\draw (-1.75,-1.25) -- (-1.75,-0.95) -- (-0.75,-0.95) -- (-0.75,-1.25);
	\draw[<-] (-1.15,-0.95) arc (30:90:0.22) node[left] {$p_{_2}(y_2|x_2)$};
	\path (0.625,-1.3) node(x) {$\ldots$};
	\draw (-0.25,-1.25) -- (-0.25,-1.1) -- (1.5,-1.1) -- (1.5,-1.25);
	\draw[<-] (0.625,-1.1) arc (30:90:0.22) node[left] {$p_{_n}(y_n|x_2)$};
	\end{tikzpicture}
	\caption[hey]
	{Illustration of the probability distributions $p_{_i}(y_i|x_1)$ (upper figure) and $p_{_i}(y_i|x_2)$ (lower figure), for $i=1,2,\ldots,n$.}
	\label{fig:3.2}
\end{figure}
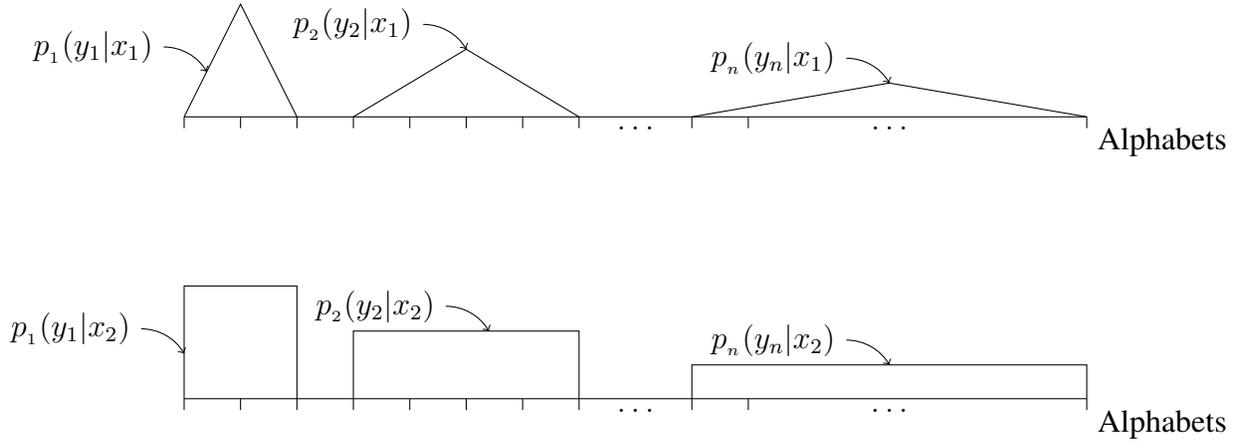

\begin{figure}[!h]
	\centering			\includegraphics[width=1\linewidth]{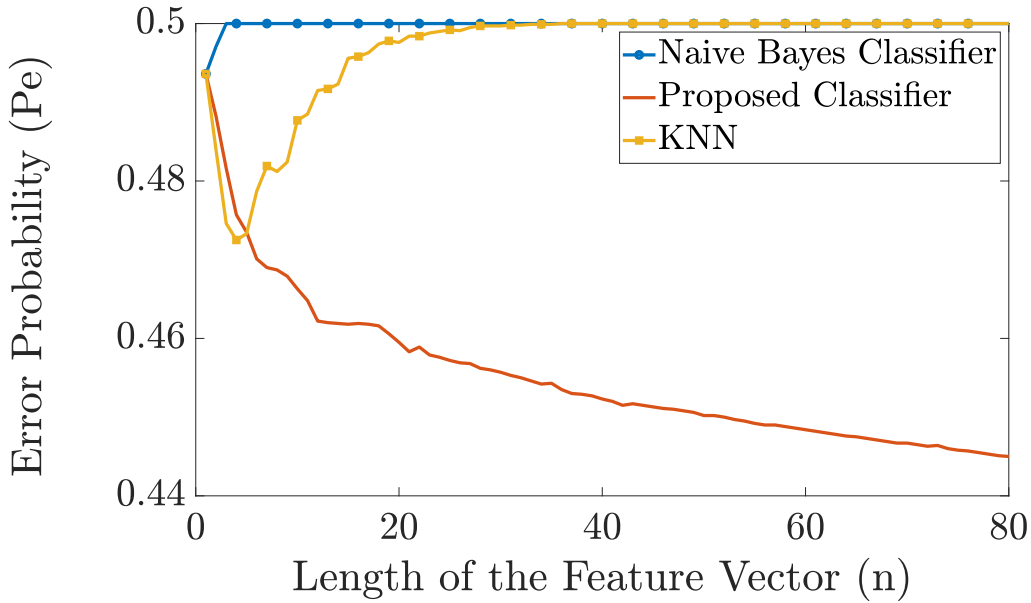}
	\caption{Comparison in error probability between the naive Bayes classifier and the proposed classifier $(T=1)$.} \label{fig:fig6}
\end{figure}

The simulation is carried out as follows. For each $n$, we generate one training feature vector for each label. Then we generate $250$ feature vectors for each label and pass it through our proposed classifier, the naive Bayes classifier and KNN and calculate the error probabilities. We change the length of the vector from $1$ to $80$ and repeat the simulation $250$ times and then plot the error probability.
Once again, there is a huge difference between the performance of the proposed classifier and the two benchmark classifiers. The error probability of the naive Bayes classifier is almost $0.5$ for all shown values of $n$ as it is susceptible to the issue with unseen alphabets in the training feature vector. The error probability of the KNN classifier is almost $0.5$ for all shown values of $n>30$ as it becomes susceptible to the ``curse of dimensionality''.
However, the error probability of our proposed classifier still decays continuously as $n$ increases.

Note that, in our numerical examples, we have compared our algorithm with the benchmark schemes on two extreme cases of i.non-i.d. vectors, referred to as ``overlapping'' and ``non-overlapping''. Any other i.non-i.d. vector can be represented as a combination of the ``overlapping'' and ``non-overlapping'' vectors. Since our algorithm works better than the benchmark schemes for small $t$ on both these cases, it will work better than the benchmark schemes on any combination between ``overlapping'' and ``non-overlapping'' vectors, i.e., for any other i.non-i.d. vectors.  

\section{Conclusion}\label{sec5}
In this paper, we investigated the important problem of classifying  feature vectors  with independent but non-identically distributed elements. For this problem, we proposed a classifier and derived an upper bound on  its error probability. Thereby, we showed that the proposed classifier is    asymptotically optimal   since its error probability goes to zero as the length of the input feature vectors grows. We showed that this asymptotic optimality is achievable even when one training feature vector per label is available. In the numerical examples, we compared the  proposed classifier with the naive Bayes classifier  and the KNN algorithm. Our numerical results show that the proposed classifier outperforms the benchmark classifiers  when the number of training data is small and the length of the input feature vectors is sufficiency large.
 
\appendix 
\subsection{Proof of Corollary 2}\label{A1}
The proof is almost identical to the proof of Theorem~\ref{th2}, however, here we derive a looser upper-bound on the error-probability than that in \eqref{eq_e9}, which is independent of $P_{\hat{y}^{nT}_i}$.

Without loss of generality we assume that $x_1$ is the input to $p_{_{Y^n|X}}(y^n|x)$ and $y^n$ is observed at the classifier.
	
	Let $\mathcal{B}^{\epsilon}_{k,l}$, for $1\leq k\leq|\mathcal{Y}|$ and $1\leq l\leq|\mathcal{X}|$, be a set defined as
	\begin{align}\label{eq_e51}
	\mathcal{B}^{\epsilon}_{k,l}=\Bigg\{\hat{y}^{nt}:\bigg|\dfrac{\mathcal{I}\big[\hat{y}^{nt}=y_k\big]}{nt}- \bar{\mathrm{p}}(y_k|x_l)\bigg|\leq\frac{\epsilon}{\sqrt[3]{t}}\Bigg\}.
	\end{align}
	
	Let $\mathcal{B}^{\epsilon}_l=\bigcap\limits_{k=1}^{|\mathcal{Y}|} \mathcal{B}^{\epsilon}_{k,l}$. For $\hat{y}_1^{nt}\in\mathcal{B}^{\epsilon}_1$, we have
	\begin{align}\label{eq_e53}
	\Bigg(\sum_{k=1}^{|\mathcal{Y}|}\bigg|\dfrac{\mathcal{I}[\hat{y}^{nt}_1=y_k]}{nt}-\bar{\mathrm{p}}(y_k|x_1)\bigg|^{r}\Bigg)^{1/r}\myleqa \Bigg(\sum_{k=1}^{|\mathcal{Y}|}\bigg(\frac{\epsilon}{\sqrt[3]{t}}\bigg)^r\Bigg)^{1/r},
	\end{align} 
	Using the same derivation as \eqref{eq_e21}, for any $y^n\in\mathcal{A}^{\epsilon}$ and for $\hat{y}^{nt}_1\in\mathcal{B}^{\epsilon}_1$, we have:
	\begin{align}\label{eq_e55}
		\Bigg(\sum_{k=1}^{|\mathcal{Y}|}\bigg|\dfrac{\mathcal{I}[y^n=y_k]}{n}-\dfrac{\mathcal{I}[\hat{y}_1^{nt}=y_k]}{nT}\bigg|^{r}\Bigg)^{1/r}\leq |\mathcal{Y}|^{1/r}\epsilon+|\mathcal{Y}|^{1/r}\frac{\epsilon}{\sqrt[3]{t}}.
	\end{align}
	On the other hand, same as the derivation in \eqref{eq_e24}, for each $i\neq 1$, we have:
	\begin{align}\label{eq_e58}
		\Bigg(\sum_{k=1}^{|\mathcal{Y}|}\bigg|\dfrac{\mathcal{I}[y^n=y_k]}{n}-\dfrac{\mathcal{I}[\hat{y}_i^{nt}=y_k]}{nt}\bigg|^{r}\Bigg)^{1/r}\geq &\big\lVert P_{\hat{y}^{nt}_i}-\bar{\mathrm{P}}_1\big\rVert_r-|\mathcal{Y}|^{1/r}\epsilon.
	\end{align}
	Now, for any $\hat{y}^{nt}_i\in\mathcal{B}^\epsilon_i$, we have
	\begin{align}\label{eq_e59}
	    &\big\lVert P_{\hat{y}^{nt}_i}-\bar{\mathrm{P}}_1\big\rVert_r+	\Bigg(\sum_{k=1}^{|\mathcal{Y}|}\bigg(\frac{\epsilon}{\sqrt[3]{t}}\bigg)^r\Bigg)^{1/r}\nonumber\\
	    &\mygeqa\Bigg(\sum_{k=1}^{|\mathcal{Y}|}\bigg|\dfrac{\mathcal{I}[\hat{y}^{nt}_i=y_k]}{nt}-\bar{\mathrm{p}}(y_k|x_1)\bigg|^{r}\Bigg)^{1/r}+\Bigg(\sum_{k=1}^{|\mathcal{Y}|}\bigg|\dfrac{\mathcal{I}[\hat{y}^{nt}_i=y_k]}{nt}-\bar{\mathrm{p}}(y_k|x_i)\bigg|^{r}\Bigg)^{1/r}\nonumber\\
	    &\mygeqb \Bigg(\sum_{k=1}^{|\mathcal{Y}|}\big|\bar{\mathrm{p}}(y_k|x_1)-\bar{\mathrm{p}}(y_k|x_i)\big|^{r}\Bigg)^{1/r},
	\end{align}
	where $(a)$ follows from \eqref{eq_e51} and $(b)$ is again due to the Minkowski inequality. The expression in \eqref{eq_e59}, can be written equivalently as
	\begin{align}\label{eq_e60}
	    \big\lVert P_{\hat{y}^{nt}_i}-\bar{\mathrm{P}}_1\big\rVert_r\geq\big\lVert \bar{\mathrm{P}}_i-\bar{\mathrm{P}}_1\big\rVert_r-|\mathcal{Y}|^{1/r}\frac{\epsilon}{\sqrt[3]{t}}.
	\end{align}
	where $i\neq1$. Using the bounds in \eqref{eq_e60} and \eqref{eq_e58}, for any $i\neq1$ we have
	\begin{align}\label{eq_e61}
	    \Bigg(\sum_{k=1}^{|\mathcal{Y}|}\bigg|\dfrac{\mathcal{I}[y^n=y_k]}{n}-\dfrac{\mathcal{I}[\hat{y}_i^{nt}=y_k]}{nt}\bigg|^{r}\Bigg)^{1/r}\geq\big\lVert \bar{\mathrm{P}}_i-\bar{\mathrm{P}}_1\big\rVert_r-|\mathcal{Y}|^{1/r}\epsilon\bigg(1+\frac{1}{\sqrt[3]{t}}\bigg).
	\end{align}

	Using the bounds in \eqref{eq_e55} and \eqref{eq_e61}, we now relate the left-hand sides of \eqref{eq_e55} and \eqref{eq_e61} as follows. As long as the following inequality holds for each $i\neq1$,
	\begin{align}\label{eq_e62}
	|\mathcal{Y}|^{1/r}\epsilon\bigg(1+\frac{1}{\sqrt[3]{T}}\bigg)<\lVert \bar{\mathrm{P}}_i-\bar{\mathrm{P}}_1\big\rVert_r-|\mathcal{Y}|^{1/r}\epsilon\bigg(1+\frac{1}{\sqrt[3]{t}}\bigg),
	\end{align}
	which is equivalent to the following for $i\neq1$
	\begin{align}\label{eq_e63}
		\epsilon<\dfrac{\big\lVert \bar{\mathrm{P}}_i-\bar{\mathrm{P}}_1\big\rVert_r}{2(1+t^{-1/3})|\mathcal{Y}|^{1/r}},
	\end{align}
	we have the following for $i\neq1$
	\begin{align}\label{eq_e64}
		\Bigg(\sum_{k=1}^{|\mathcal{Y}|}\bigg|\dfrac{\mathcal{I}[y^n=y_k]}{n}-\dfrac{\mathcal{I}[\hat{y}_1^{nt}=y_k]}{nt}\bigg|^{r}\Bigg)^{1/r}&\myleqa 	|\mathcal{Y}|^{1/r}\epsilon\bigg(1+\frac{1}{\sqrt[3]{t}}\bigg)\nonumber\\
		&\myleqqb \lVert \bar{\mathrm{P}}_i-\bar{\mathrm{P}}_1\big\rVert_r-|\mathcal{Y}|^{1/r}\epsilon\bigg(1+\frac{1}{\sqrt[3]{t}}\bigg)\nonumber\\&\myleqc 	\Bigg(\sum_{k=1}^{|\mathcal{Y}|}\bigg|\dfrac{\mathcal{I}[y^n=y_k]}{n}-\dfrac{\mathcal{I}[\hat{y}_i^{nt}=y_k]}{nt}\bigg|^{r}\Bigg)^{1/r},
	\end{align}
	where $(a)$, $(b)$, and $(c)$ follow from \eqref{eq_e55}, \eqref{eq_e62}, and \eqref{eq_e61}, respectively. Thereby, from \eqref{eq_e64}, we have the following for $i\neq1$
	\begin{align}\label{eq_e65}
		\Bigg(\sum_{k=1}^{|\mathcal{Y}|}\bigg|\dfrac{\mathcal{I}[y^n=y_k]}{n}-\dfrac{\mathcal{I}[\hat{y}_1^{nt}=y_k]}{nt}\bigg|^{r}\Bigg)^{1/r}\leq\Bigg(\sum_{k=1}^{|\mathcal{Y}|}\bigg|\dfrac{\mathcal{I}[y^n=y_k]}{n}-\dfrac{\mathcal{I}[\hat{y}_i^{nt}=y_k]}{nt}\bigg|^{r}\Bigg)^{1/r},
	\end{align}
	or equivalently as  
	\begin{align}\label{eq_e66}
		\big\lVert P_{y^n}-P_{\hat{y}_1^{nt}}\big\rVert_r<\big\lVert P_{y^n}-P_{\hat{y}_i^{nt}}\big\rVert_r.
	\end{align}
	Once again, we obtained that if there is an $\epsilon$ for which \eqref{eq_e63} holds for $i\neq1$  and for that $\epsilon$ there are sets $\mathcal{A}^{\epsilon}$ and $\mathcal{B}^{\epsilon}_i$ for which $y^n\in\mathcal{A}^{\epsilon}$ and $\hat{y}^{nt}_j\in\mathcal{B}^{\epsilon}_l$ for all $1\leq l\leq|\mathcal{X}|$, then \eqref{eq_e66} holds for $i\neq1$, and thereby our classifier will detect that $x_1$ is the correct label. Using this, we can upper-bound the error probability as
	\begin{align}\label{eq_e67}
	\mathbb{P}_\e&=1-\pr\big\{\hat{x}_1= x_1\big\}\nonumber\\&\leq 1-\pr\Bigg\{\big(y^n\in\mathcal{A}^{\epsilon}\big)\cap\bigg(\bigcap_{j=1}^{|\mathcal{X}|}\hat{y}^{nt}_l\in\mathcal{B}^{\epsilon}_l\bigg)\bigg\lvert\epsilon\in\mathcal{S}\Bigg\},
	\end{align}
	where $\mathcal{S}$ is a set defined as
	\begin{align}\label{eq_e68}
		\mathcal{S}=\Bigg\{\epsilon:\epsilon\leq\min_{\substack{i\\i\neq 1}}\dfrac{\big\lVert \bar{\mathrm{P}}_i-\bar{\mathrm{P}}_1\big\rVert_r}{(2+t^{-1/3})|\mathcal{Y}|^{1/r}}\Bigg\}.
	\end{align}
	The right-hand side of \eqref{eq_e67} can be upper-bounded as
	\begin{align}\label{eq_e69}
	1-\pr\Bigg\{\big(y^n\in\mathcal{A}^{\epsilon}\big)\cap\bigg(\bigcap_{l=1}^{|\mathcal{X}|}\hat{y}^{nt}_l\in\mathcal{B}^{\epsilon}_j\bigg)\bigg\lvert\epsilon\in\mathcal{S}\Bigg\}&=\pr\Bigg\{\big(y^n\notin\mathcal{A}^{\epsilon}\big)\cup\bigg(\bigcup_{l=1}^{|\mathcal{X}|}\hat{y}^{nt}_l\notin\mathcal{B}^{\epsilon}_l\bigg)\bigg\lvert\epsilon\in\mathcal{S}\Bigg\}\nonumber\\
	&\myleqa\pr\big\{y^n\notin\mathcal{A}^{\epsilon}\lvert\epsilon\in\mathcal{S}\big\}\nonumber\\&\quad+\sum_{l=1}^{|\mathcal{X}|}\pr\Big\{\hat{y}^{nt}_l\notin\mathcal{B}^{\epsilon}_l\big\lvert\epsilon\in\mathcal{S}\Big\},
	\end{align}
	Using the same derivation as \eqref{eq_e35}, we have:
	\begin{align}\label{eq_e72}
		\pr\big\{y^n\notin\mathcal{A}^{\epsilon}\lvert\epsilon\in\mathcal{S}\big\}\leq 2|\mathcal{Y}|\e^{-2n\epsilon^2}.
	\end{align}
Similarly, we have the following result for the second expression in the right-hand side of \eqref{eq_e69}, same as the derivation in \eqref{eq_e39}
	\begin{align}\label{eq_e76}
	\pr\Big\{\hat{y}^{nt}_l\notin\mathcal{B}^{\epsilon}_l\big\lvert\epsilon\in\mathcal{S}\Big\}\leq 2|\mathcal{Y}|\e^{-2nt^{1/3}\epsilon^2}.
	\end{align}
	Inserting \eqref{eq_e72} and \eqref{eq_e76} into \eqref{eq_e69}, and then inserting \eqref{eq_e69} into \eqref{eq_e67}, we obtain the following upper-bound for the error probability
	\begin{align}\label{eq_e77}
	\mathbb{P}_\e\leq 2|\mathcal{Y}|\e^{-2n\epsilon^2}+2|\mathcal{X}||\mathcal{Y}|\e^{-2nt^{1/3}\epsilon^2},
	\end{align}
	where 
	\begin{align}\label{eq_e78}
		\epsilon = \min_{\substack{i,j\\i\neq j}}\dfrac{\big\lVert \bar{\mathrm{P}}_i-\bar{\mathrm{P}}_j\big\rVert_r}{2(1+t^{-1/3})|\mathcal{Y}|^{1/r}},
	\end{align}
	Now, if $|\mathcal{Y}|\leq n^m$, \eqref{eq_e77} can be written as
\begin{align}\label{eq_e79}
\mathbb{P}_\e&\leq 2|\mathcal{Y}|\e^{-2n\epsilon^2}+2|\mathcal{X}||\mathcal{Y}|\e^{-2nt^{1/3}\epsilon^2}\nonumber\\
&\leq 2n^m\exp\Bigg(-2n\min_{\substack{i,j\\i\neq j}}\dfrac{\big\lVert \bar{\mathrm{P}}_i-\bar{\mathrm{P}}_j\big\rVert^2_r}{2(1+t^{-1/3})^2n^{2m/r}}\Bigg)\nonumber\\
&+2|\mathcal{X}|n^m\exp\Bigg(-2nt^{1/3}\min_{\substack{i,j\\i\neq j}}\dfrac{\big\lVert \bar{\mathrm{P}}_i-\bar{\mathrm{P}}_j\big\rVert^2_r}{2(1+t^{-1/3})^2n^{2m/r}}\Bigg)\nonumber\\
&\leq \mathcal{O}\bigg(n^m\exp\Big(-n^{1-\frac{2m}{r}}\Big)\bigg).
\end{align}
According to \eqref{eq_e79}, for a fixed $r>2m$, the right-hand side of \eqref{eq_e79} goes to zero as $n\to\infty$, and thereby, the classifier is asymptotically optimal.		

\bibliographystyle{IEEEtran}
\bibliography{citations}


\end{document}